\documentclass{article}
\usepackage{tabularx}

\usepackage{microtype}
\usepackage{graphicx}
\usepackage{subcaption}
\usepackage{booktabs} 
\usepackage{multirow} 
\usepackage{enumitem}
\usepackage[T1]{fontenc}
\usepackage{listings}
\usepackage[most]{tcolorbox}
\usepackage{xcolor}
\usepackage[numbers,sort&compress]{natbib}

\usepackage{fvextra}
\usepackage{hyperref}

\usepackage{PRIMEarxiv}

\usepackage{amsmath}
\usepackage{amssymb}
\usepackage{mathtools}
\usepackage{amsthm}
\usepackage{graphicx}
\usepackage{subcaption}

\theoremstyle{plain}
\newtheorem{theorem}{Theorem}[section]
\newtheorem{proposition}[theorem]{Proposition}

\theoremstyle{definition}
\newtheorem{definition}[theorem]{Definition}
\newtheorem{assumption}[theorem]{Assumption}
\theoremstyle{remark}

\def \v x{\bm x}

\def \v x{\bm X}

\renewcommand{\v}[1]{\ensuremath{\boldsymbol{#1}}}

\usepackage[textsize=tiny]{todonotes}

\title{Characterizing Memorization in Diffusion Language\\ Models: Generalized Extraction and Sampling Effects}

\author{Xiaoyu Luo\textsuperscript{1},\space
  Wenrui Yu\textsuperscript{2},\space
  Qiongxiu Li\textsuperscript{2},\space
  Johannes Bjerva\textsuperscript{1}\\
  \textsuperscript{1}Department of Computer Science,\space
  \textsuperscript{2}Department of Electronic Systems\\
  Aalborg University, Copenhagen, Denmark\\
  \texttt{\{xilu,jbjerva\}@cs.aau.dk,\space \{wenyu,qili\}@es.aau.dk}
}

\begin{document}
\maketitle

\begin{abstract}

Autoregressive language models (ARMs) have been shown to memorize and occasionally reproduce training data verbatim, raising concerns about privacy and copyright liability. Diffusion language models (DLMs) have recently emerged as a competitive alternative, yet their memorization behavior remains largely unexplored due to fundamental differences in generation dynamics. To address this gap, we present a systematic theoretical and empirical characterization of memorization in DLMs.
We propose a generalized probabilistic extraction framework that unifies prefix-conditioned decoding and diffusion-based generation under arbitrary masking patterns and stochastic sampling trajectories. Theorem \ref{theo.sampling} establishes a monotonic relationship between sampling resolution and memorization: increasing resolution strictly increases the probability of exact training data extraction, implying that autoregressive decoding corresponds to a limiting case of diffusion-based generation by setting the sampling resolution maximal.
Extensive experiments across model scales and sampling strategies validate our theoretical predictions. Under aligned prefix-conditioned evaluations, we further demonstrate that DLMs exhibit substantially lower memorization-based leakage of personally identifiable information (PII) compared to ARMs.

\end{abstract}

\section{Introduction}
Large Language Models (LLMs) have achieved remarkable success and have been integrated into a wide range of everyday applications. Despite this progress, the field has long been dominated by the decoder-only architecture under the ARMs paradigm \cite{radford2018improving,radford2019language,brown2020language}. More recently, diffusion principles \cite{ho2020denoising} have been successfully incorporated into masked DLMs \cite{lou2023discrete,shi2024simplified,ou2024your}, leading to the emergence of Large DLMs \cite{nie2025scaling,nie2025large}. DLMs have demonstrated competitive scalability and performance with fewer training tokens. Since the training paradigm of these models fundamentally shifts from unidirectional next-token prediction to a bidirectional masking and reverse denoising process, their memorization behavior remains a critical yet unexplored frontier. Unlike ARMs that minimize the negative log-likelihood of sequential conditionals, DLMs optimizes a variational lower bound through a non-causal masking objective \cite{ou2024your}. This fundamental difference in how models internalize data necessitates a formal definition and systematic investigation of their memorization behavior, as it remains unclear how bidirectional denoising affects exposure to training data.

In the context of ARMs, a substantial body of work has shown that models can memorize and reproduce training samples verbatim \cite{carlini2021extracting,carlini2022quantifying,kiyomaru-etal-2024-comprehensive-analysis,huang2024demystifying,li2024rome,zhang2025extending,luo2025shared}. Accordingly, studying memorization is a practical necessity, as it is closely tied to privacy leakage \cite{huang2022large,kim2023propile} and copyright infringement \cite{cooper2025extracting,ahmed2026extracting}.

Standard discoverable extraction procedures typically rely on prefix–suffix prompting \cite{carlini2022quantifying,hayes2025measuring}. In contrast, DLMs differ in both conditioning and sampling, so we introduce an adapted evaluation framework to measure discoverable extraction under their bidirectional denoising paradigm.

In this work, we present the first systematic investigation into the memorization behavior of diffusion language models (DLMs). Our contributions are three-fold:
\begin{figure}[t]
    \centering
    \includegraphics[width=0.6\linewidth]{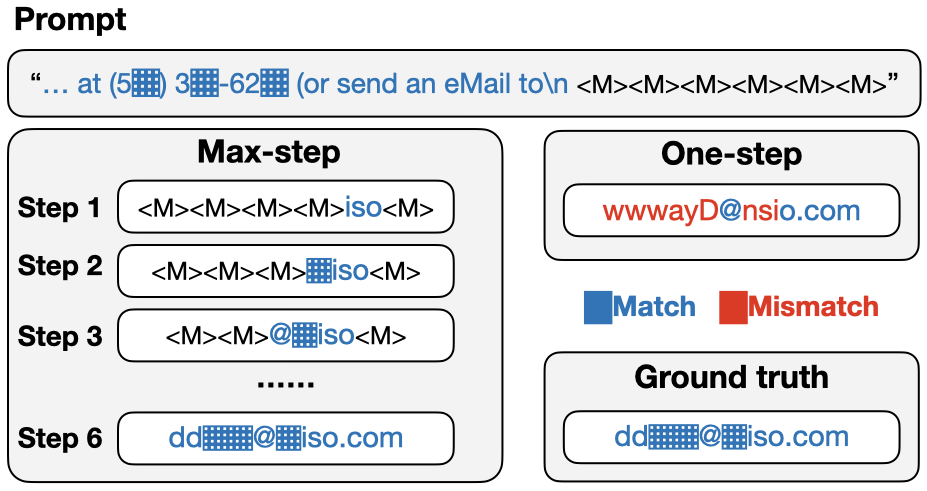}
    \caption{A randomly sampled verbatim PII memorization example from diffusion language model (\textsc{LLaDA-8B)}. This case shows that finer-grained generation resolution (i.e., more steps) is more prone to verbatim recovery of training data. $\langle$M$\rangle$ denotes a mask token; \textcolor{blue}{blue} highlights matched strings, and \textcolor{red}{red} highlights mismatches.}
    \label{fig:firstpage}
\end{figure}
\begin{itemize}
    \item \textbf{A generalized formulation of memorization for diffusion language models.}
    We develop a generalized probabilistic formulation of discoverable memorization that applies to DLMs with arbitrary masking patterns and stochastic sampling trajectories, under which the prefix-conditioned $(n,p)$-discoverable extraction for ARMs arises as a special case.
 
    \item \textbf{Sampling resolution provably controls memorization.}
    We prove a monotonicity result showing that, under a natural recovery assumption, increasing the sampling resolution (i.e., recovering masked tokens in finer-grained denoising steps) monotonically increases the probability of exact memorization, and we demonstrate this effect empirically across sampling strategies and model scales.
 
    \item \textbf{Aligned analysis of PII memorization across model scales and architectures.}
    We align autoregressive and diffusion language models under the same prefix-conditioned PII completion setting and quantify memorization across multiple model scales and architectures, finding that autoregressive models exhibit higher PII memorization and leakage risk.
\end{itemize}

\section{Related Work}
\subsection{Language Model Architecture}
\paragraph{The Autoregressive Paradigm.} The contemporary landscape of LLMs is predominantly defined by the ARMs paradigm, which factorizes the joint probability of sequences into a product of sequential conditionals \cite{radford2018improving,radford2019language,brown2020language}. Leveraging the Transformer architecture, ARMs have demonstrated exceptional scalability and the emergence of complex capabilities \cite{wei2022emergent} under empirical scaling laws \cite{kaplan2020scaling,hoffmann2022training}. Despite their success, ARMs exhibit inherent limitations directly tied to their unidirectional nature, most notably the "reversal curse"—a systemic failure to generalize bidirectional relationships \cite{berglund2023reversal}.

\paragraph{Masked Diffusion Paradigm.} Diffusion models have seen success in the computer vision domain, typically formulated as a continuous generative task \cite{ho2020denoising,song2020score}. For the text-based task, one line of research explores continuous diffusion by embedding tokens into continuous spaces \cite{li2022diffusion,gong2022diffuseq,zhang2023planner}. In another discrete line, masked diffusion paradigm has emerged as a principled generative alternative for language modeling, leveraging a unique probabilistic framework that enables flexible bidirectional context modeling \cite{austin2021structured,sahoo2024simple,shi2024simplified}. This architecture is characterized by a forward process that either preserves tokens or replaces them with a unique mask token, and is trained with a reconstruction objective that approximates the reverse generative process. Foundational theoretical frameworks for Masked DLMs have established their connection to maximum likelihood estimation and variational lower bounds \cite{ou2024your}. Empirically, Masked DLMs have demonstrated promising results in smaller-scale settings \cite{lou2023discrete}.

\paragraph{Large Diffusion Language Models.} Large DLMs are rapidly advancing diffusion-based language modeling into a regime that is competitive with mainstream ARMs at a comparable scale. \citet{nie2025scaling} introduces the first formal scaling law for Masked DLMs, showing an ARM-like power-law improvement with scale. Building on this line of work, LLaDA \cite{nie2025large} follows the standard modern LLM training recipe with large-scale pretraining followed by supervised fine-tuning (SFT), and scales large DLMs to 8B parameters. It delivers broadly competitive performance against the same scale ARMs baseline on a suite of zero-/few-shot benchmarks, and shows in-context learning comparable to strong 8B-class models such as LLaMA3 \cite{llama3modelcard}. Beyond accuracy, large DLMs’ bidirectional conditioning improves robustness to directionality, outperforming GPT-4o \cite{openai_gpt4o_2024} in the reversal settings completion task. Most recently, LLaDA2.0 \cite{bie2025llada2} further scales diffusion language models to 16B/100B and shows that performance continues to improve at scale, achieving strong results on broad evaluations, while retaining the efficiency benefits of parallel generation.

\subsection{Language Model Memorization}
Prior work has established that LLMs can memorize and regurgitate portions of their training data \cite{carlini2021extracting,carlini2022quantifying,huang2024demystifying,li2024rome,zhang2025extending}. Early evidence shows that black-box adversaries can extract hundreds of verbatim training sequences, including rare and sensitive content, from generative models~\cite{carlini2021extracting}.  Subsequent work quantifies how extractable memorization, showing consistent increases with both model capacity and training-data repetition~\cite {carlini2022quantifying}.

Beyond a strong verbatim-controlled setting, a growing body of research investigates memorization risks in realistic scenarios involving personally identifiable information (PII). Studies on PII leakage indicate that LLMs may retain sensitive personal information, which can be reconstructed through association-based attacks~\cite{huang2022large,kim2023propile}.  From a complementary angle, ~\citet{lukas2023analyzing} formalize and evaluate PII leakage under black-box access, showing that semantic associations can enable adversaries to recover private attributes via indirect prompting. 
More recently,~\citet{luo2026llms} argue that memorization estimates may be inflated when prompts contain implicit cues, leading to an overestimation of a model’s true memorization behavior.

Memorization and data extraction have also been examined in the context of production and aligned language models.~\citet{nasr2025scalable} show that alignment safeguards can be circumvented at scale, enabling the extraction of memorized training samples from deployed proprietary systems.  In parallel, analyses of open-weight models reveal that certain copyrighted books are memorized to a much greater extent than previously understood, in some cases allowing near-complete reconstruction via probabilistic extraction~\cite{cooper2025extracting}. 
Extending these observations to deployed systems,~\citet{ahmed2026extracting} demonstrates that, under specific prompting and jailbreak conditions, substantial portions of in-copyright books can be extracted near-verbatim, highlighting persistent memorization risks despite system-level safeguards.

Most prior evaluations of memorization rely on greedy or single-query extraction settings, which may not reflect realistic deployment conditions. To address this limitation, ~\citet{hayes2025measuring} introduces probabilistic extractable memorization metrics that measure the expected query effort required to recover a memorized sequence under non-deterministic decoding and repeated querying, providing a more deployment-relevant assessment of memorization risk.

Finally, memorization has also been studied in multilingual settings. 
~\citet{luo2025shared} extend memorization measurements to multilingual LLMs and identify a long-tail \cite{li2025trustworthy} effect mediated by language similarity: within similar language groups, lower-resource languages tend to exhibit higher levels of memorization.

\section{Preliminaries}

In this section, we briefly review DLMs and ARMs architectures, introduce their commonly used sampling paradigms, and finally present the existing $(n,p)$-discoverable extraction framework defined for ARMs.

\subsection{Masked Diffusion Model}
We review the framework of the masked diffusion paradigm \cite{ou2024your,nie2025large}, which serves as the fundamental architecture for Large DLMs.

\textbf{Forward Process.}
Let $\mathcal{V} = \{0, 1, \dots, K-1\}$ be the vocabulary set. Given a sequence $\v{z}_0 \in \mathcal{V}^L$ of length $L$, the forward process at noise level $t \in [0, 1]$ gradually replaces tokens. The transition probability is defined as
\begin{equation}
    q_{t|0}(\v{z}_t | \v{z}_0) = \prod_{l=1}^{L} q_{t|0}( z_t^l | z_0^l),
\end{equation}
where $q_{t|0}(z_t^l | z_0^l) = \begin{cases} \alpha_t, & z_t^l = z_0^l \\ 1 - \alpha_t, & z_t^l = m \end{cases}$. $\v z_t$ denotes the noisy data at time $t$, $z_t^l$ denote the $l$-th element of $\v z_t$, $m$ denotes a special [MASK] token \cite{devlin2019bert} and $\alpha_t = 1 - t$.

\textbf{Reverse Process.}
The reverse process will recover the original tokens $\v{z}_0$ from a fully masked sequence $\v{z}_1$. For any $0 \leq s < t \leq 1$, the transition $q_{s|t}(\v{z}_s | \v{z}_t)$ recovers a portion of the masked tokens, which can be formulated as
\begin{equation}
    q_{s|t}(z_s^l | \v{z}_t) = \begin{cases} 
    1, & z_t^l \neq m, z_s^l = z_t^l \\ 
    \frac{s}{t}, & z_t^l = m, z_s^l = m \\ 
    \frac{t-s}{t} q_{0|t}(z_s^l | \v{z}_t), & z_t^l = m, z_s^l \neq m \\ 
    0, & \text{otherwise} \end{cases}
\end{equation}

\textbf{Training Objective.}
A model $p_\theta(z_0^l | \v{z}_t)$ is trained to approximate the true data distribution by minimizing the following negative log-likelihood upper bound as
\begin{equation}
    \mathcal{L} = \int_0^1 \frac{1}{t} \mathbb{E}_{q(\v{z}_t|\v{z}_0)} \left[ \sum_{l: z_t^l = m} -\log p_\theta(z_0^l | \v{z}_t) \right] dt.
\end{equation}

\textbf{Sampling Strategy.}
Sampling iteratively updates the currently masked positions while keeping unmasked tokens unchanged. A \emph{linearly spaced} time grid $\{t_i\}_{i=0}^{N}$ is used, with $t_0=1$ and $t_{i+1}<t_i$, $N$ is the sampling resolution. The linear schedule determines the fraction of tokens to be recovered at each step $t\!\to\! s$
by $1-\frac{s}{t}$. At step $t\!\to\! s$, for each masked position $l$ ($z_t^l=m$), the model predicts a token
$\hat z_0^l=\arg\max_{v\in\mathcal V} p_\theta(v\mid\v z_t)$ with confidence
$p_\theta(\hat z_0^l\mid\v z_t)$.
\emph{Greedy sampling} selects the masked positions with the highest confidence and recovers
$k=\lfloor |\mathcal M|(1-\frac{s}{t})\rfloor$ tokens, while \emph{random sampling} recovers the
same number of tokens by uniformly sampling positions from $\mathcal M$.

\subsection{Autoregressive Model}
Most existing LLMs follow the ARM formulation \cite{radford2018improving}. Given a sequence $\v{z}\in\mathcal{V}^L$, an ARMs factorizes
\begin{equation}
    p_\theta(\v{z}) = \prod_{l=1}^{L} p_\theta\!\left(z^l \mid \v z^{1:l-1}\right),
\end{equation}
where $\v z^{1:l-1}=[z^1,\dots,z^{l-1}]$ is the prefix; we use this as the canonical baseline paradigm throughout.

\subsection{Sampling Paradigm}
We then briefly introduce the sampling paradigms that are commonly used in ARMs and DLMs. Among these paradigms, temperature and top-$k$ sampling are widely adopted in ARMs \cite{fan2018hierarchical}, while DLMs use Gumbel noise perturbation \cite{nie2025scaling}.

\paragraph{Temperature Sampling.}
 Given a prefix $\v z^{1:l-1}$, let $\ell_v$ denote the logit for token $v \in \mathcal{V}$. Temperature sampling parameterizes the predictive categorical distribution as
\[
\text{Pr}(z^l=v \mid \v z^{1:l-1}) = \frac{\exp(\ell_v/T)}{\sum_{u\in \mathcal{V}}\exp(\ell_u/T)}, \quad T>0.
\]
Here, $T$ serves as a scaling factor that controls the sharpness of the distribution: a lower $T$ concentrates the probability mass on high-logit tokens, thereby reducing stochasticity, while a higher $T$ flattens the distribution, promoting diversity in generation. The standard softmax distribution is recovered as a special case when $T=1$.

\paragraph{Top-$k$ sampling.}
Let $p(v)=\mathrm{softmax}(\ell_v)$ and let $\mathcal{K}$ be the set of $k$ tokens with largest $p(v)$. Top-$k$ sampling restricts the distribution to $\mathcal{K}$ and renormalizes it as
\[
Pr(z^l=v \mid \v z^{1:l-1}) \;=\;
\begin{cases}
\frac{p(v)}{\sum_{u\in \mathcal{K}} p(u)}, & v \in \mathcal{K},\\
0, & \text{otherwise}.
\end{cases}
\]
This procedure collapses to deterministic greedy decoding when $k=1$, where only the most probable token is selected.

\paragraph{Gumbel noise perturbation.}
In diffusion models, a commonly used sampling paradigm is Gumbel noise perturbation.
To perform stochastic token sampling, we draw i.i.d. random variables
$U_v \sim \mathrm{Uniform}(0,1)$ for each vocabulary item $v \in \mathcal{V}$,
and construct the perturbed scores as
\begin{equation}
S_v \;=\; \frac{\exp(\ell_v)}{\big(-\log U_v\big)^{T_{\mathrm{Gumbel}}}},
\end{equation}
where $T_{\mathrm{Gumbel}}$ is a temperature parameter that controls the level of stochasticity.
The sampled token is then selected as the most confident one, i.e.,
\[
\arg\max_{v \in \mathcal{V}} S_v.
\]
When $T_{\mathrm{Gumbel}} = 0$, $S_v = \exp(\ell_v)$.
In this case, the procedure reduces to greedy decoding, which can be naturally paired with
deterministic sampling in diffusion models.

\subsection{$(n,p)$-discoverable Extraction Framework}\label{ssec.hayes}

We review two definitions of memorization for ARMs as introduced by \citet{hayes2025measuring}.

Let $[L] = \{1,2, \dots, L\}$ be the set of all token indices and $\v{z} = [z_1, \dots, z_L]\in\mathcal{V}^L$ be a training example. 
\begin{definition}[$(n,p)$-discoverable Extraction]
A sequence is $(n,p)$-discoverably extractable if, given a prefix, the model generates the exact suffix within $n$ independent queries with probability at least $p$.
\end{definition}

Let $p_{\v{z}}$ denote the probability of generating the exact suffix in a single trial. Consequently, the probability of failing to recover the correct suffix in $n$ independent trials is $(1 - p_{\v{z}})^n$. In order to ensure that the probability of at least one successful extraction is at least $p$, we require $1 - (1 - p_{\v{z}})^n \geq p$. Solving for $n$, we obtain the estimation for the required number of queries as
\begin{equation}
     n \geq \frac{\log (1-p)}{\log (1-p_{\v{z}})}.
\end{equation}

This relationship implies that the extractability parameters $(n, p)$ can be determined once $p_{\v{z}}$ is known. In practice, $p_{\v{z}}$ is computed as the cumulative product of the conditional probabilities associated with recovering each token at its corresponding sampling step.

Beside the exact extraction, \citet{hayes2025measuring} also proposed a relaxed version for approximate recovery.
\begin{definition}[$(\epsilon, n, p)$-discoverable Extraction]
A suffix of an example is $(\epsilon, n, p)$-discoverably extractable if the probability that at least one of $n$ independent queries results in a generated suffix with a distance $\le \epsilon$ from the ground truth is at least $p$.
\end{definition}

\section{A Generalized Discoverable Memorization Framework for Diffusion Language Models}

Existing definitions of discoverable memorization~\citep{hayes2025measuring} are inherently tied to the autoregressive paradigm, where memorization is evaluated through a fixed prefix--suffix structure under left-to-right decoding. While this formulation closely mirrors the unidirectional generation process of autoregressive language models, it does not naturally extend to diffusion language models, which recover masked tokens through stochastic, non-sequential denoising trajectories.

To address this mismatch, we develop a generalized formulation of discoverable memorization that removes the prefix-based decoding assumption and explicitly accounts for arbitrary masking patterns and diffusion sampling trajectories. 
 Moreover, within our proposed framework, prefix-conditioned extraction can be recovered as a special case (see Proposition~\ref{prop.special}), allowing memorization in diffusion language models to be analyzed under their native generation dynamics. 


\subsection{Generalized Exact Memorization in DLMs}

We first focus on the case of exact memorization.

\subsubsection{Definition and Approximation}
\begin{definition}[Generalized $(n,p)$-discoverable Extraction]\label{dif.ourmem}
A sequence $\v{z}$ is $(n,p)$-discoverably extractable under a mask $\mathcal{M} \subset \{1, \dots, L\}$ if, given the observed tokens $\v{z}_{\bar{\mathcal{M}}}$, the model recovers the exact original tokens at all masked positions $\mathcal{M}$ within $n$ independent queries with probability at least $p$. 

Formally, let $\hat{\v{z}}^{(i)}_{\mathcal{M}}$ be the output of the $i$-th independent sampling. The sequence $\v{z}$ is $(n,p)$-extractable if:
\begin{equation}
    \text{Pr}\left( \exists i \in \{1, \dots, n\} : \hat{\v{z}}^{(i)}_{\mathcal{M}} = \v{z}_{\mathcal{M}} \mid \v{z}_{\bar{\mathcal{M}}} \right) \geq p,
\end{equation}
where $\bar{\mathcal{M}}=[L]\backslash\mathcal{M}$ denotes the complement set of indices (the observed positions).
\end{definition}

Following \citet{hayes2025measuring}, the required number of independent queries $n$ can be estimated given a target extraction probability $p$ and an approximation of the single-trial recovery probability $p_{\v z}$. In the context of diffusion-based LLMs, the approximation of $p_{\v z}$ is a bit different and it is inherently dependent on the sampling resolution $N$. 

For a sampling process with $N$ steps in the $i$-th independent query, we partition $\mathcal{M}$ into $N$ disjoint chunks $\{\Delta_1^{(i)}, \Delta_2^{(i)}, \dots, \Delta_N^{(i)}\}$ such that $\Delta_k^{(i)}$ represents the set of token indices recovered at step $k$. Let $\mathcal{U}_k^{(i)}$ be the set of observed indices after step $k$, with $\mathcal{U}_0^{(i)} = [L]\backslash\mathcal{M}$ and $\mathcal{U}_k^{(i)} = \mathcal{U}_{k-1}^{(i)} \cup \Delta_k^{(i)}$. 
Because tokens are recovered independently at each step (given the unmasked context), the total probability $p_{\v z}^{(i)}$ is defined by the multiplicative product of their respective conditional probabilities
\begin{align}
\label{eq.pz}
    \nonumber p_{\v z}^{(i)} &= \text{Pr}(\hat{\v z}_{\mathcal{M}} = \v z_{\mathcal{M}} \mid \v z_{\bar{\mathcal{M}}}, N) \\
    &= \prod_{k=1}^N \prod_{\pi \in \Delta_k^{(i)}} \text{Pr}(\hat{z}_{\pi} = z_{\pi} \mid \v z_{\mathcal{U}_{k-1}^{(i)}}).
\end{align}

Unlike ARMs, where the decoding order is typically fixed, diffusion-based models exhibit stochasticity in the recovery path (i.e., the sequence of token subsets selected at each step). Consequently, the single-trial probability $p_{\v z}$ cannot be reliably estimated from a single query, as is possible in the deterministic setting of autoregressive. To account for this randomness, we estimate the expected $\hat{p}_{\v z}$ by averaging results over a set of $R$ trials, i.e. $\hat{p}_{\v z}=\frac{1}{R}\sum_{i=1}^{R}p_{\v z}^{(i)}$. The validation results are in Sec~\ref{sec:emp_difi}.

\subsubsection{The Impact of Sampling Resolution}

\begin{assumption}[Monotonicity of Recovery Probability]\label{ass.mono}
We assume that for any inference step, the probability of correctly recovering a subset of masked tokens increases as the set of observed tokens expands. 

Formally, for any two observation index sets $\mathcal{U}, \mathcal{W} \subseteq [L]$ such that $\mathcal{U} \subset \mathcal{W}$, and for any target mask set $\mathcal{M} \subseteq [L] \setminus \mathcal{W}$, we have
\begin{equation}
    \text{Pr}(\hat{\v{z}}_{\mathcal{M}} = \v{z}_{\mathcal{M}} \mid \v{z}_{\mathcal{W}}) \geq \text{Pr}(\hat{\v{z}}_{\mathcal{M}} = \v{z}_{\mathcal{M}} \mid \v{z}_{\mathcal{U}})
\end{equation}
where $\v{z}_{\mathcal{U}}$ and $\v{z}_{\mathcal{W}}$ denotes the sub-vector of tokens at indices in $\mathcal{U}$ and $\mathcal{W}$ respectively, and $\hat{\v{z}}_{\mathcal{M}}$ is the reconstruction of tokens at indices $\mathcal{M}$. 
\end{assumption}

This assumption formalizes the intuitive property that additional \textit{correct context} should not reduce recovery likelihood, which is also validated empirically in Sec~\ref{sec:6.2}.
It is important to note that we specifically consider context consisting of correctly recovered tokens, rather than potentially erroneous predictions. This focus is consistent with our objective of analyzing \textit{exact memorization}, where the recovery of the entire sequence is only successful if every intermediate step along the sampling trajectory remains faithful to the training instance. 

Hereby we have the following Theorem.
\begin{theorem}[Impact of Sampling Resolution on Memorization]\label{theo.sampling}
Under Assumption~\ref{ass.mono}, for diffusion-based LLMs, the probability of generating a sequence that exactly matches a training instance generally increases with the number of sampling steps $N$.

In certain cases, this relationship is theoretically monotonic under a fixed recovery sequence. Let $\sigma = (\pi_1, \pi_2, \dots, \pi_{M})$ be a fixed permutation of indices in $\mathcal{M}$ representing the recovery order. For any $N$, we partition $\sigma$ into $N$ disjoint and ordered chunks $\mathcal{P}_N = \{\Delta_1, \Delta_2, \dots, \Delta_N\}$. 
If $N_1 > N_2$ and the partition $\mathcal{P}_{N_1}$ is a refinement of $\mathcal{P}_{N_2}$ (i.e., each chunk in $\mathcal{P}_{N_2}$ is the union of one or more consecutive chunks in $\mathcal{P}_{N_1}$), then:
\begin{equation}
    \text{Pr}(\hat{\v{z}}_{\mathcal{M}} = \v{z}_{\mathcal{M}} \mid \v{z}_{\bar{\mathcal{M}}}, N_1) \geq \text{Pr}(\hat{\v{z}}_{\mathcal{M}} = \v{z}_{\mathcal{M}} \mid \v{z}_{\bar{\mathcal{M}}}, N_2).
\end{equation}
\end{theorem}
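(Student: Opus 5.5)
We will prove only the precise monotonic statement under a fixed recovery order $\sigma$; the informal ``generally increases'' clause follows as its heuristic consequence. The plan is to reduce the refinement comparison to comparing a single coarse chunk with its split into consecutive sub-chunks, and then apply Assumption~\ref{ass.mono} to each split.

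\textbf{Step 1: factorize both probabilities.} Using the product formula \eqref{eq.pz}, we write
$\text{Pr}(\hat{\v z}_{\mathcal M}=\v z_{\mathcal M}\mid \v z_{\bar{\mathcal M}},N)=\prod_{k=1}^{N}\text{Pr}(\hat{\v z}_{\Delta_k}=\v z_{\Delta_k}\mid \v z_{\mathcal U_{k-1}})$, where $\mathcal U_{k-1}=\bar{\mathcal M}\cup\Delta_1\cup\dots\cup\Delta_{k-1}$. Here we group the within-chunk factors into a joint chunk probability, which is justified by the conditional independence of tokens within a step. Since $\mathcal P_{N_1}$ refines $\mathcal P_{N_2}$ and both respect the order $\sigma$, each coarse chunk $\Delta^{(2)}_j$ equals $\Delta^{(1)}_{a}\cup\dots\cup\Delta^{(1)}_{b}$ for consecutive indices $a,\dots,b$. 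The observed set just before $\Delta^{(2)}_j$ in the coarse process coincides with the observed set just before $\Delta^{(1)}_a$ in the fine process. Call this common set $\mathcal U$. It therefore suffices to show, for each $j$,
\begin{equation*}
\prod_{r=a}^{b}\text{Pr}\bigl(\hat{\v z}_{\Delta^{(1)}_r}=\v z_{\Delta^{(1)}_r}\mid \v z_{\mathcal U\cup\Delta^{(1)}_a\cup\dots\cup\Delta^{(1)}_{r-1}}\bigr)\;\ge\;\text{Pr}\bigl(\hat{\v z}_{\Delta^{(2)}_j}=\v z_{\Delta^{(2)}_j}\mid \v z_{\mathcal U}\bigr).
\end{equation*}
The theorem then follows by multiplying these inequalities over $j$, since all factors are nonnegative.

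\textbf{Step 2: compare one chunk with its split.} In the coarse process all of $\Delta^{(2)}_j$ is predicted in parallel from $\mathcal U$, so by the within-step independence the right-hand side factors as $\prod_{r=a}^{b}\text{Pr}(\hat{\v z}_{\Delta^{(1)}_r}=\v z_{\Delta^{(1)}_r}\mid \v z_{\mathcal U})$. For each $r$, take $\mathcal W=\mathcal U\cup\Delta^{(1)}_a\cup\dots\cup\Delta^{(1)}_{r-1}\supseteq\mathcal U$ and target set $\Delta^{(1)}_r\subseteq[L]\setminus\mathcal W$. Assumption~\ref{ass.mono} then gives the $r$-th left factor $\ge$ the $r$-th right factor. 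When $r=a$ the two sets coincide and the factors are equal. Taking the product over $r$ gives the chunk inequality.

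\textbf{Main obstacle.} The delicate point is Step 2's factorization of the coarse chunk probability into a product of sub-chunk probabilities conditioned only on $\mathcal U$. This requires the per-step conditional independence stated before \eqref{eq.pz}. We also need Assumption~\ref{ass.mono} applied to sub-chunks, which it covers because it quantifies over arbitrary target sets. We would state explicitly that the recovery order, and hence the chunks, is fixed and independent of sampling randomness, so that the same $\mathcal U$ arises in both processes. Finally, we would remark that when the order is random, averaging over trajectories whose partitions are nested preserves the inequality, which supports the ``generally increases'' claim.
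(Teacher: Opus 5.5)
Your proposal is correct and is essentially the paper's proof. The paper argues token by token that, under the refinement, the observed set preceding each $\pi\in\mathcal{M}$ in the $N_1$ process contains the corresponding set in the $N_2$ process; it applies Assumption~\ref{ass.mono} to each factor and multiplies over the product formula \eqref{eq.pz}. Your grouping by coarse chunk and its consecutive sub-chunks (with equality at the first sub-chunk) is the same argument with slightly more explicit bookkeeping.
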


\begin{proof}
    See Appendix~\ref{app.theo1proof}.
\end{proof}

This theorem establishes that increasing the sampling resolution consistently enhances the probability of exact memorization. Since the number of sampling steps $N$ is upper-bounded by the total number of masked tokens $|\mathcal{M}|$, the most granular refinement occurs when $N=|\mathcal{M}|$. In this limiting case, the diffusion model recovers tokens in one-by-one fashion, mirroring the fundamental mechanism of ARMs. This leads to the following proposition.
\begin{proposition} \label{prop.special}
Conceptually, ARMs can be viewed as a special case of a diffusion-based LLM where the sampling resolution $N$ reaches its maximum value ($N=|\mathcal{M}|$). In this regime, the model recovers only one token per step in a fixed sequential order. Following the logic of Theorem~\ref{theo.sampling}, as the sampling process transitions from parallel refinement to fully sequential generation, the increasing availability of step-wise context suggests that the ARM-style decoding represents an upper-limit behavior for exact memorization probability.
\end{proposition}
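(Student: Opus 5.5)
We propose to make Proposition~\ref{prop.special} precise as the extreme case of Theorem~\ref{theo.sampling}, then apply that theorem directly. First we formalize the ARM-style decoding inside the generalized framework of Definition~\ref{dif.ourmem}. Take the mask $\mathcal{M}$ to be a suffix $\{l_0+1,\dots,L\}$, the observed set $\bar{\mathcal{M}}$ to be the prefix, and the fixed recovery order to be the left-to-right permutation $\sigma=(l_0+1,\dots,L)$. With resolution $N=|\mathcal{M}|$, every chunk is a singleton $\Delta_k=\{\pi_k\}$ and $\mathcal{U}_{k-1}=\bar{\mathcal{M}}\cup\{\pi_1,\dots,\pi_{k-1}\}$. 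Equation~\eqref{eq.pz} then becomes
\begin{equation*}
\text{Pr}(\hat{\v z}_{\mathcal{M}}=\v z_{\mathcal{M}}\mid \v z_{\bar{\mathcal{M}}},|\mathcal{M}|)=\prod_{k=1}^{|\mathcal{M}|}\text{Pr}(\hat z_{\pi_k}=z_{\pi_k}\mid \v z^{1:\pi_k-1}).
\end{equation*}
This is exactly the product of sequential conditionals used to compute $p_{\v z}$ in the $(n,p)$-discoverable extraction of Section~\ref{ssec.hayes}. So the prefix-conditioned extraction of \citet{hayes2025measuring} is recovered as the special case $N=|\mathcal{M}|$ with a fixed sequential order.

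Second, we show that this singleton partition $\mathcal{P}_{|\mathcal{M}|}$ refines every ordered partition $\mathcal{P}_N$ of $\sigma$ into consecutive chunks. Any consecutive chunk is a union of consecutive singletons, so this holds trivially. We then invoke Theorem~\ref{theo.sampling} with $N_1=|\mathcal{M}|$ and $N_2=N$, which gives
\begin{equation*}
\text{Pr}(\hat{\v z}_{\mathcal{M}}=\v z_{\mathcal{M}}\mid \v z_{\bar{\mathcal{M}}},|\mathcal{M}|)\ \ge\ \text{Pr}(\hat{\v z}_{\mathcal{M}}=\v z_{\mathcal{M}}\mid \v z_{\bar{\mathcal{M}}},N)
\end{equation*}
for every $N\le|\mathcal{M}|$ that respects the order $\sigma$. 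Hence sequential generation is the maximal element of the refinement order and gives the upper limit. To make the reliance on Assumption~\ref{ass.mono} explicit, we would briefly recall the mechanism behind Theorem~\ref{theo.sampling}. Splitting a chunk $\Delta=\Delta'\cup\Delta''$ replaces the joint-recovery probability of $\Delta''$ given $\v z_{\mathcal{U}}$ by its probability given $\v z_{\mathcal{U}\cup\Delta'}$, which is no smaller by the assumption. Recovery is conditionally independent within a step, so the per-step probabilities factorize.

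The main obstacle is that the proposition is stated conceptually, while a genuine ARM is a different model $p_\theta$ from the DLM. The inequality therefore holds only within one fixed model that is evaluated under different resolutions. It is not a claim comparing an ARM's weights with a DLM's weights. A second caveat is that the greedy and random sampling schedules produce random recovery orders. Theorem~\ref{theo.sampling} compares partitions of a single fixed $\sigma$, so the bound holds per trajectory. To extend it to $\hat p_{\v z}$, we would first condition on the order $\sigma$, apply the per-order inequality, and then average over $\sigma$. This step requires that the distribution over orders be the same at both resolutions. Where it is not, we would present the statement only as the ``upper-limit behavior'' phrasing of the proposition, not as a strict inequality.
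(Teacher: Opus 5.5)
Your proposal is correct and follows the paper's own reasoning. The paper gives no separate proof: it notes that $N=|\mathcal{M}|$ is the finest refinement (singleton chunks), so Theorem~\ref{theo.sampling} with $N_1=|\mathcal{M}|$ makes one-token-per-step decoding the upper limit; your reduction of \eqref{eq.pz} to the prefix-conditioned product and your caveats (a single fixed model, a fixed order $\sigma$, and averaging over orders only when their distribution does not depend on resolution) make precise what the paper leaves informal.
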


\subsection{Generalized Relaxed Memorization in DLMs}

\begin{definition}[Generalized $(\epsilon,n,p)$-discoverable Extraction]
We generalize the notion of discoverable extraction to arbitrary masking patterns. An example $\v{z}$ is $(\epsilon, n, p)$-discoverably extractable under a mask $\mathcal{M}$ if, given the observed tokens $\v{z}_{\bar{\mathcal{M}}}$, the model generates a reconstructed sequence $\hat{\v{z}}_{\mathcal{M}}$ such that the distance to the original tokens $d(\hat{\v{z}}_{\mathcal{M}}, \v{z}_{\mathcal{M}}) \leq \epsilon$ holds for at least one of $n$ independent queries with probability at least $p$, i.e.,
\begin{equation}
    \text{Pr}\left( \exists i \in \{1, \dots, n\} : d(\hat{\v{z}}^{(i)}_{\mathcal{M}}, \v{z}_{\mathcal{M}}) \leq \epsilon \mid \v{z}_{\bar{\mathcal{M}}} \right) \geq p,
\end{equation}
where $d(\cdot, \cdot)$ is a distance metric. When $\epsilon = 0$, this definition reduces to the case of exact extraction.
\end{definition}

In \citet{hayes2025measuring}, the error-tolerant extraction probability $p_{\v z, \epsilon}$ can be empirically estimated by calculating the proportion of successful reconstructions (the distance is within the threshold $\epsilon$) across $R$ independent trials, i.e.,
\begin{equation}\label{eq.enp_estimation}
    \hat{p}_{\v z, \epsilon} = \frac{1}{R} \sum_{i=1}^R \mathbb{I}\left( d(\hat{\v z}^{(i)}_{\mathcal{M}}, \v z_{\mathcal{M}}) \le \epsilon \right).
\end{equation}
 This estimation method naturally extends to the generalized definition. Specifically, when the error tolerance is set to $\epsilon = 0$, the estimator $\hat{p}_{\v z, 0}$ provides a direct empirical approximation for the exact memorization probability $\text{Pr}(\hat{\v z}_{\mathcal{M}} = \v z_{\mathcal{M}} \mid \v z_{\bar{\mathcal{M}}}, N)$ with a large $R$.

In addition, we investigate the statistical behavior of the error distribution under large-scale trials. We refer the reader to Appendix~\ref{app.enp} for the related experimental analysis.

\section{Experimental Setup}
\label{sec:setup}
\subsection{Model}
\paragraph{Foundation model (DLM and ARM).}
To isolate scale and architectural effects on memorization, we pretrain our DLMs and an ARM baseline under an identical setup. Specifically, we reproduce the training recipe of \citet{nie2025scaling}.\ under a fixed compute budget of $10^{21}$ FLOPs, and train three DLMs with 170M, 690M, and 1.1B parameters on SlimPajama. For a one-to-one paradigm comparison, we also train a 1.1B ARM baseline with the \emph{exact same} setup (and tokenizer) as the 1.1B DLM. It ensures that any differences in memorization behavior are attributable primarily to the scale and generative architecture, rather than to data, tokenization, or training procedure.

\paragraph{Finetuning (LLaDA-8B alignment).}
Since LLaDA-8B’s original pretraining data are not publicly available, we evaluate memorization after finetuning on the Enron email dataset. To ensure comparability across model families, we use the same finetuning protocol for LLaDA-8B and all SlimPajama-pretrained DLMs and ARMs, continuing training for one epoch on Enron with each model’s end-of-pretraining learning rate. This shared adaptation setting ensures memorization differences are not driven by mismatched finetuning procedures.

\subsection{Dataset Collection}
\paragraph{Validating Data Collection.}
For the validation experiments (Sec \ref{sec:emp_difi}, \ref{sec:6.2}), we draw evaluation samples from the pretrain dataset by uniformly sampling \textbf{100-token windows} at random from the corpus. For each window, we apply random masking with mask ratio $r \in \{0.20, 0.25, 0.30\}$, and evaluate reconstruction/extraction under these different masking levels.

\paragraph{PII Collection.}
For the alignment experiment (Sec. \ref{sec:align}), we construct a PII-focused evaluation set from the Enron email dataset, extracting emails and phone numbers via regular expressions, following \cite{kim2023propile} procedure (see Appendix~\ref{app:regex} for details). We sample \textbf{3,000} email-containing texts and \textbf{3,000} phone-number-containing texts, and for each PII occurrence use the 100 preceding tokens as the prefix to test \textbf{verbatim memorization} under prefix-conditioned generation. To avoid spurious pattern-based cues, email prompts are further filtered to contain no additional email addresses. 

\section{Analysis and Results}
This section empirically evaluates our memorization framework. We first validate that generalized discoverable extraction is measurable in DLMs despite stochastic masking and iterative denoising, and then confirm that finer sampling resolution increases verbatim memorization. We further align DLMs and ARMs in a PII completion setting to compare memorization-based leakage under identical prefix-conditioned reconstruction, and finally show the metric reflects memorization rather than generalization via train vs.\ disjoint same-domain test comparisons.
\subsection{Rationality of generalized discoverable extraction in DLMs}\label{sec:emp_difi}
\begin{figure}[ht]
    \centering
    \includegraphics[width=0.6\linewidth]{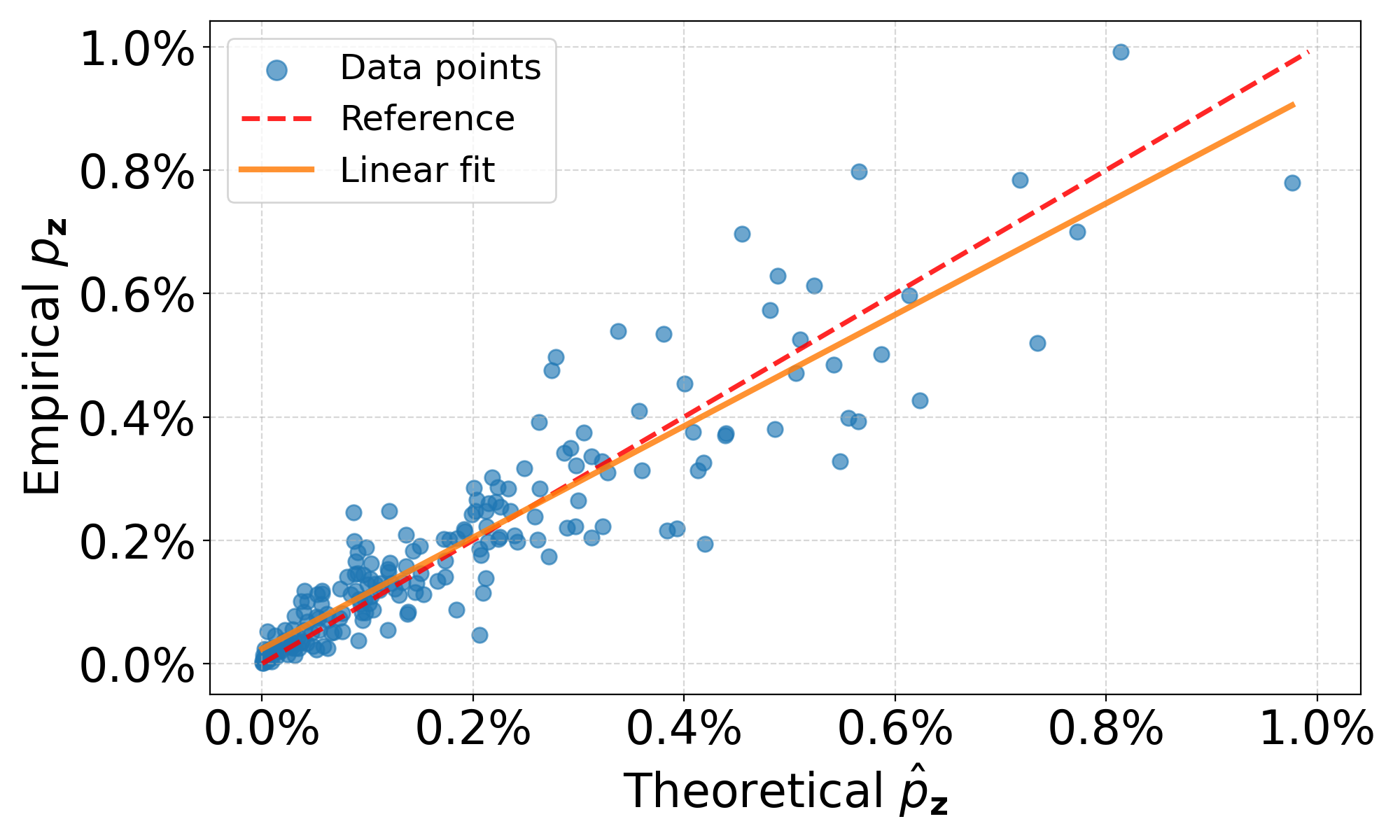}
    \caption{Empirical vs.\ theoretical memorization probability measured by \textsc{DLM-1.1B} on 200 SlimPajama examples. Empirical $p_{\v z}$ is estimated from 100{,}000 random-decoding generations; theoretical $\hat{p}_{\v z}$ is estimated from 1{,}024 mask-patterns at a fixed mask ratio ($[0.2,0.3]$).}

    \label{fig:pp}
\end{figure}
To validate that our proposed notion of memorization is operational and measurable in practice (Definition~\ref{dif.ourmem}), we quantify the memorization probability at the level of individual examples. 

Since DLMs decoding depends on stochastic masking and iterative denoising, we fix the mask ratio and sample multiple mask-patterns (i.e., different masked positions); we then aggregate the one-step recovery outcomes across these mask-patterns to estimate the probability of recovering the target content.

Figure~\ref{fig:pp} compares the theoretical $\hat{p}_{\v z}$ with the empirical $p_{\v z}$. For computational feasibility, we evaluate on $200$ examples, uniformly sampled from those with a preliminary estimated recovery probability above $10^{-3}$ (screened using $1{,}024$ mask-patterns at a fixed mask ratio in $[0.2,0.3]$). For these selected examples, we then re-estimate $\hat{p}_{\v z}$ from scratch using a fresh, independent set of $1{,}024$ mask-patterns, and compute $p_{\v z}$ from $100{,}000$ randomly decoded generations. The two distributions exhibit a high level of agreement throughout the range, suggesting that \eqref{eq.pz} reliably approximates the underlying recovery probability with relatively few mask-patterns. Additional results under different numbers of mask-patterns are provided in Appendix~\ref{appendix:sampling-resolution}.

\subsection{Impact of Sampling Resolution on Verbatim Memorization}\label{sec:6.2}
\begin{figure}[ht]
    \centering
    \includegraphics[width=0.5\linewidth]{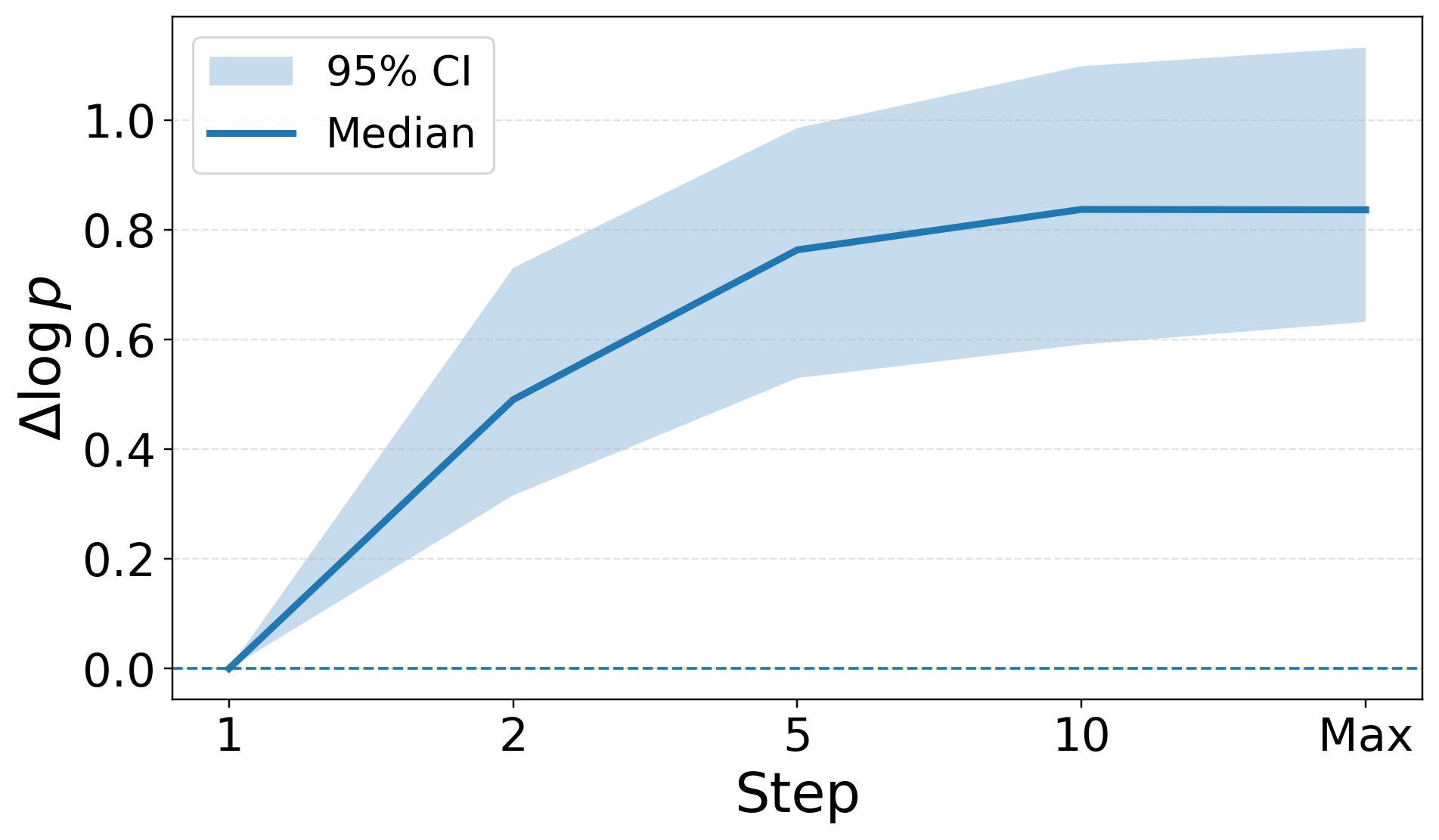}
    \caption{Empirical log-probability ratio across decoding steps (\textsc{DLM-1.1B}). We sample 200 mask-patterns and, for each, run $10{,}000$ random decoding generations to estimate success probabilities at steps $\{1,2,5,10,\text{per-token}\}$. We plot the median log-probability difference to one-step (baseline), with $95\%$ bootstrap Confidence Intervals (CIs) across generation-trajectories.}

    \label{fig:lnpgainmedian}
\end{figure}
In this section, we provide empirical evidence supporting Assumption~\ref{ass.mono}. Building on Sec~\ref{sec:emp_difi}, we select 200 mask-patterns from the previous section whose theoretical one-step recovery rate exceeds $0.01$, a subset chosen for computational feasibility. For each selected trajectory, we run \textbf{$10{,}000$} unconstrained generation trials under five generation resolutions: $1$-step ($N=1$), $2$-step ($N=2$), $5$-step ($N=5$), $10$-step ($N=10$), and Max ($N=|\mathcal{M}|$, where the number of masked tokens is used as the generation step.)

Figure~\ref{fig:lnpgainmedian} summarizes these \emph{empirical} exact-recovery success rates by plotting, for each resolution, the cross-trajectory median of the log success-rate ratio relative to the one-step baseline. We report $\Delta \log p := \mathrm{median}\!\bigl(\log(\hat{p}_N/\hat{p}_1)\bigr)$, where $\hat{p}_N$ is the empirical exact-recovery hit rate at resolution $N$ and the median is taken over the mask-patterns samples.
The resulting curve exhibits a clear upward trend as the number of generation steps increases, suggesting an approximately monotonic relationship between generation resolution and exact-recovery success. This implies that allocating more denoising steps consistently increases the likelihood of reproducing the target content exactly, providing empirical support for Assumption~\ref{ass.mono}.  Additional memorization examples are provided in Appendix~\ref{app:memexp}.

\subsection{Empirical Analysis of PII Leakage via Aligned ARM–LDM Measurement}\label{sec:align}
\begin{table}[t]
\centering
\small
\setlength{\tabcolsep}{3pt}
\begin{tabular}{llcccc}
\toprule
\multirow{2}{*}{\textbf{Model}} & \multirow{2}{*}{\textbf{Step}} &
\multicolumn{2}{c}{\textbf{Email}} &
\multicolumn{2}{c}{\textbf{Phone}} \\
\cmidrule(lr){3-4}\cmidrule(lr){5-6}
& & $p$ = 50\% & $p$ = 99\% & $p$ = 50\% & $p$ = 99\% \\
\midrule

\multirow{2}{*}{\textsc{DLM-1.1B}}
  & One & 0   & 0   & 0 & 0 \\
  & Max & 16  & 7   & 0 & 0 \\
\cmidrule(lr){1-6}

\textsc{ARM-1.1B}
  & N/A & 213 & 139 & 5 & 3 \\
\midrule

\multirow{2}{*}{\textsc{LLaDA-8B}}
  & One & 9   & 8   & 7  & 3 \\
  & Max & 179 & 122 & 23 & 17 \\
\bottomrule
\end{tabular}
\caption{Generalized $(n,p)$-discoverable Extraction. Number of memorized samples at target probabilities $p$ under a query budget of $n=10{,}000$ on $3{,}000$ completion prompts, for different PII types.}
\label{tab:nptable}
\end{table}

Under the aligned PII prefix-suffix verbatim completion task, DLMs exhibit substantially lower $(n,p)$-discoverable PII memorization than scale-matched ARMs. Table~\ref{tab:nptable} reports the number of memorized email and phone samples at target probabilities $p\in\{0.5,0.99\}$ under a query budget of $n=10{,}000$ over $3{,}000$ prompts. This gap may partly reflect differences between training objectives: random masking in DLMs disrupts learning to preserve long contiguous context, while the ARMs training paradigm naturally matches the sequential prefix-suffix decoding. 

At a larger scale, the \textsc{LLaDA-8B} model exhibits a clear increase in memorization when moving from single-step reconstruction to per-token reconstruction, providing practical task evidence for Assumption~\ref{ass.mono} that finer-grained denoising increases exact recovery probability. However, even under per-token reconstruction, the 8B diffusion model is only comparable to the 1.1B ARM. Importantly, this comparison should be interpreted in light of training cost: LLaDA-8B is trained with roughly two orders of magnitude more FLOPs and about $150\times$ more training token exposure than the 1.1B ARM baseline (15.6B vs.\ 2300B) \cite{nie2025large}. Overall, in this aligned prefix-conditioned PII setting, DLMs exhibit lower memorization behavior.

\begin{figure}[h]
    \centering
    \includegraphics[width=0.5\linewidth]{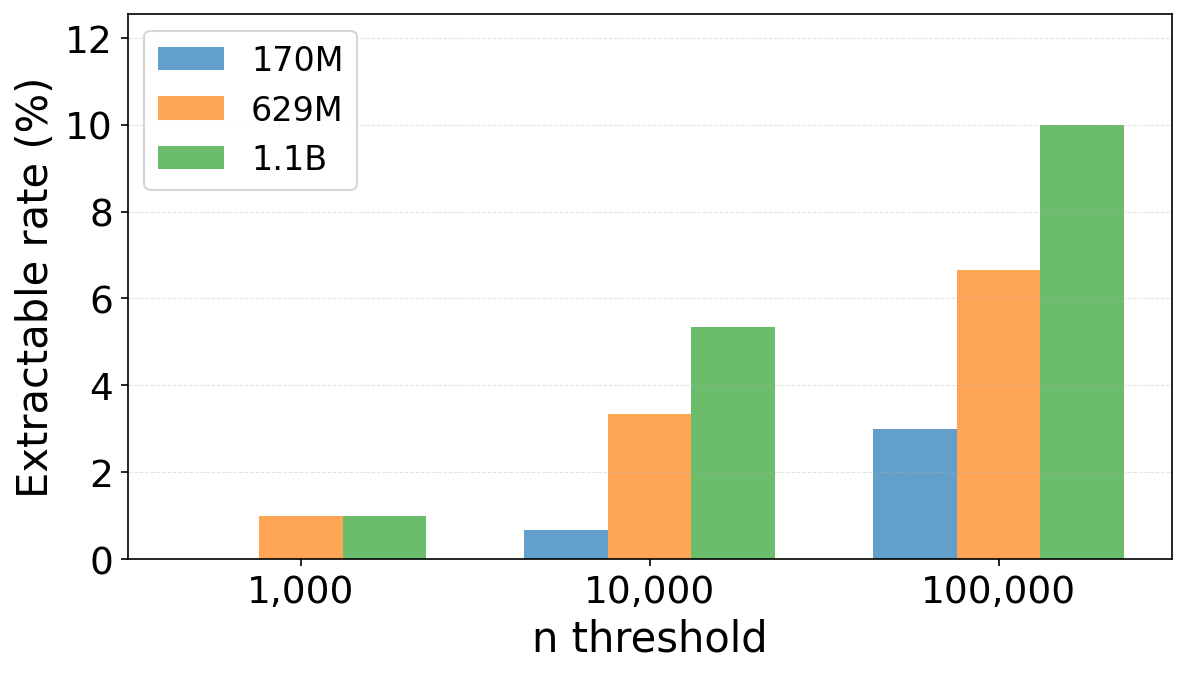}
    \caption{Per-token-step email hit rates under different model scales (170M/629M/1.1B). Each bar reports the Extractable rate over $3{,}000$ samples, measured as whether an email can be recovered within a querying budget $n$, at fixed $p=0.5$.}
    \label{fig:differentscale}
\end{figure}

Figure~\ref{fig:differentscale} reports $(n,p)$-discoverable email memorization across aligned DLMs scales, with extractable rates increasing with model size. For each model, gains from larger query budgets are sublinear, suggesting most practically discoverable memorization is recovered with modest $n$ and further increases yield little additional privacy leakage. Detailed PII leakage for other DLMs scales is provided in Appendix~\ref{app:pii}.

\subsection{Validating Memorization Beyond Generalization}\label{sec:enrontrec}
\begin{figure}[ht]
    \centering
    \includegraphics[width=0.7\linewidth]{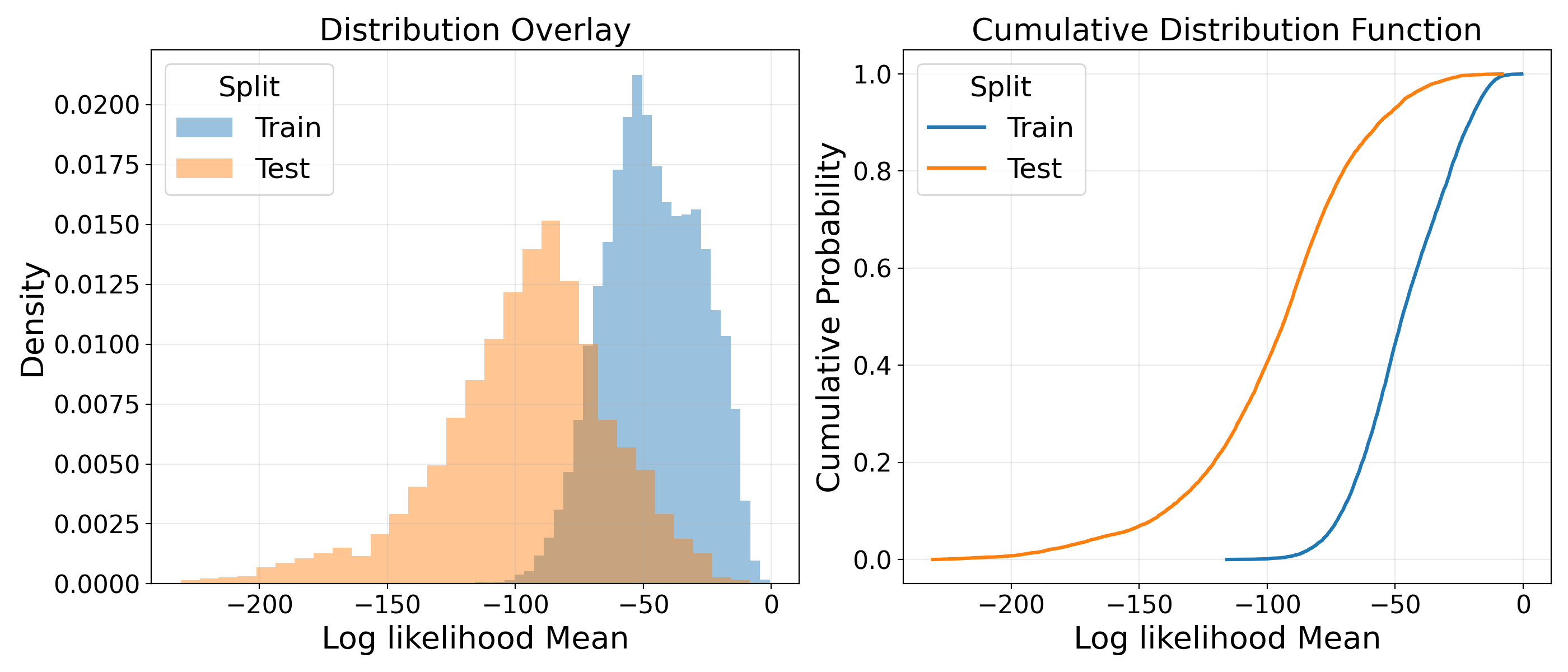}
    \caption{Validating memorization of training data. We compare the reconstruction likelihood from the Enron (train) against the unseen from the TREC 2007 Spam (test), which is from the same domain.}
    \label{fig:traintest}
\end{figure}

A key question is whether our extraction metric reflects true training-data memorization rather than generalization-based reconstruction, especially since \textsc{DLMs-1.1B} shows a low level of verbatim memorization. 

To validate the metric, we follow prior work \cite{hayes2025measuring} and use the TREC 2007 Spam email dataset\cite{bratko2006spam}. Like the Enron dataset, it consists of email text and therefore lies in the same broad domain, while remaining disjoint from the training data. From each corpus, we sample $10,000$ contents of length $100$ tokens, and evaluate extraction using a mask ratio of $0.25$ with $512$ sampling queries per example.

Figure \ref{fig:traintest} compares training and test data under our extraction metric using a distribution overlay and the corresponding cumulative distribution function. In both views, the training data distribution is shifted toward higher log-likelihood values than the unseen test data, indicating uniformly higher reconstruction likelihood for training data. This clear separation suggests that the metric captures memorization rather than reconstruction driven by generalization. Accordingly, even though \textsc{DLM-1.1B} shows low overall verbatim memorization, it exhibits a systematic advantage on Enron, implying that the extracted content reflects memorized training examples rather than coincidental matches. The consistent gap between train and test further supports the robustness of the previous section’s findings. Additional results for the other model scales are provided in Appendix~\ref{app:pii}, Figure~\ref{fig:traintest-appendix}.

\section{Conclusion and Future Work}
This paper introduces a generalized probabilistic extraction framework for measuring memorization in diffusion language models, allowing discoverable extraction beyond prefix-based decoding to arbitrary masking patterns and stochastic diffusion sampling trajectories. It further establishes a principled link between sampling design and leakage risk by proving that finer-grained sampling resolution (more denoising steps) increases the probability of memorization, with autoregressive decoding recoverable as a special case of maximal resolution. Empirically, across model scales and sampling strategies, and under an aligned prefix-conditioned PII completion task, DLMs exhibit lower memorization than ARM counterparts.  Building on these results, future work will measure how post-training (SFT and preference optimization) changes memorization in diffusion language models using evaluation protocols that preserve aligned behavior and match real-deployment prompting. In particular, it is important to test whether post-training drives DLMs toward more prefix–suffix-style generation, and whether this shift increases verbatim memorization and extractability.

\section*{Impact Statements}






This paper presents a theoretical and empirical analysis of memorization in diffusion-based Large Language Models. Our findings contribute to the broader field of trustworthy AI by elucidating how sampling strategies influence the risk of training data leakage. While our analysis reveals potential vulnerabilities in how these models retain training data, this understanding is a prerequisite for building robust defenses and ensuring the responsible deployment of generative models in privacy-sensitive domains. There are many potential societal consequences of our work, and we believe our contribution to data transparency and model auditing is the most significant one.

\section*{Acknowledgements}
We thank the Aalborg University AI:X initiative for enabling this work via the AI:SECURITY lab. We further acknowledge the support of the AAU AI Cloud and express our gratitude to DeiC for providing computing resources on the LUMI cluster (project nr. 465002249).

\bibliography{ref}
\bibliographystyle{unsrtnat}

\newpage
\appendix
\onecolumn

\section{Proof of Theorem~\ref{theo.sampling}}\label{app.theo1proof}

\begin{proof}

Consider two sampling step counts $N_1 > N_2$, where the partition $\mathcal{P}_{N_1}$ is a refinement of $\mathcal{P}_{N_2}$. For any token $\pi \in \mathcal{M}$, let $\mathcal{U}^{(prev)}$ and $\mathcal{W}^{(prev)}$ be the observed index sets at the step immediately preceding the recovery of $\pi$ in the respective $N_1$ and $N_2$ steps sampling processes.

Since $\mathcal{P}_{N_1}$ is a refinement of $\mathcal{P}_{N_2}$ and the recovery order $\sigma$ is fixed, the $N_1$ process updates the observed set more frequently. Specifically, for any token $\pi$, the set of tokens already recovered in the $N_1$ process is always a superset of those recovered in the $N_2$ process at the time $\pi$ is sampled:
\begin{equation}
    \mathcal{W}^{(prev)} \subseteq \mathcal{U}^{(prev)}.
\end{equation}

By Assumption~\ref{ass.mono}, the probability of correctly recovering a token $\pi$ increases with the size of the observed set. Therefore, for every $\pi \in \mathcal{M}$:
\begin{equation}
    \text{Pr}(\hat{z}_{\pi} = z_{\pi} \mid \v{z}_{\mathcal{U}^{(prev)}}) \geq \text{Pr}(\hat{z}_{\pi} = z_{\pi} \mid \v{z}_{\mathcal{W}^{(prev)}}).
\end{equation}
Since we assume the same recovery order of tokens in two cases, taking the product over all $\pi \in \mathcal{M}$, we can conclude that:
\begin{equation}
    \text{Pr}(\hat{\v{z}}_{\mathcal{M}} = \v{z}_{\mathcal{M}} \mid \v{z}_{\bar{\mathcal{M}}}, N_1) \geq \text{Pr}(\hat{\v{z}}_{\mathcal{M}} = \v{z}_{\mathcal{M}} \mid \v{z}_{\bar{\mathcal{M}}}, N_2).
\end{equation}
This completes the proof.
\end{proof}

\section{Validation of generalized $(\epsilon,n,p)$-discoverable extraction}\label{app.enp}

We initiate our empirical analysis by performing $R=10,000$ stochastic generation trials on a series of infilling tasks, where 20 tokens are randomly masked within a 100-token window, randomly selecting from Sec~\ref{sec:6.2} masked-pattern. For each trial, we record the exact number of wrongly reconstructed tokens. Based on the empirical observations of the resulting distribution, we postulate the following assumptions.

\begin{assumption}
    For a diffusion-based LLM, let $d(\hat{\v z}^{(i)}_{\mathcal{M}}, \v z_{\mathcal{M}})$ denote the Hamming distance between the $i$-th independently sampled reconstruction and the ground-truth suffix.
    We observe that for sufficiently large $|\mathcal{M}|$, the distribution of $d$ over multiple trials exhibits a strong central tendency. Specifically, within its central regime (excluding the tails), the distribution can be approximated by a normal distribution
\begin{equation}
    d(\hat{\v z}_{\mathcal{M}}^{(i)}, \v z_{\mathcal{M}}) \approx \mathcal{N}(\mu, \sigma^2),
\end{equation}
where the mean $\mu$ and variance $\sigma^2$ are parameters governed by the model's intrinsic extraction capability.
\end{assumption}

This Gaussian behavior is motivated by the fact that the Hamming distance can be decomposed into a sum of $M$ indicator random variables
\begin{equation}
    d = \sum_{j=1}^{|\mathcal{M}|} \mathbb{I}(\hat{z}_{\pi_j} \neq z_{\pi_j}),
\end{equation}
where each indicator represents a token-level recovery failure.
Owing to the inherent stochasticity of the sampling process in large DLMs, the total error count can be modeled as a sum of weakly dependent Bernoulli random variables. In accordance with the Central Limit Theorem (CLT) for weakly dependent sequences, the aggregate sum converges toward a normal distribution as the number of masked tokens $|\mathcal{M}|$ increases. The Quantile–Quantile (Q-Q) plots in Figure~\ref{fig:qq_cdf_pairs} (left) provide strong empirical evidence for the normality of the recovered token counts. Specifically, the strict adherence of the sample quantiles to the diagonal reference line suggests that the error distribution across $10,000$ trials is well-characterized by a normal distribution. While minor deviations appear in the tail portions, the empirical distribution aligns well with the theoretical normal curve in the central regime. 

In contrast, Figure~\ref{fig:ar_hit_token_hist_pairs} reveals a markedly different trend for ARMs. Due to the sequential nature of the ARM sampling process, there exist significantly stronger inter-token dependencies, which prevent the distribution from converging to a normal form, leading to a distinct departure from the normality assumption.

The right column of Figure~\ref{fig:qq_cdf_pairs} compares the empirical CDF obtained via \eqref{eq.enp_estimation} with the Gaussian CDFs fitted using $10,000$ and $128$ trials, respectively. As illustrated, the Gaussian approximation aligns well with the empirical CDF, effectively capturing the behavior of $\hat{p}_{\mathbf{z}, \epsilon}$. While minor discrepancies still appear in the extreme regions, where the Gaussian model tends to underestimate the probability of tail events far from the mean, the overall fit remains robust. Notably, the curves fitted with $128$ and $10,000$ trials are remarkably similar. This suggests that a significantly smaller number of trials can be employed to obtain a rough estimation of $\hat{p}_{\mathbf{z}, \epsilon}$ within the central regime, offering a more query-efficient alternative to the empirical counting method
\begin{equation}
    \hat{p}_{\mathbf{z}, \epsilon} = \text{Pr}(d(\hat{\mathbf{z}}_{\mathcal{M}}^{(i)}, \mathbf{z}_{\mathcal{M}}) \leq \epsilon) \approx \Phi\left(\frac{\epsilon - \mu}{\sigma}\right),
\end{equation}
where $\Phi(\cdot)$ denotes the CDF of the standard normal distribution $\mathcal{N}(0,1)$, while $\mu$ and $\sigma$ represent the mean and standard deviation estimated from a few trials.

\begin{figure}[ht]
  \centering

  \begin{subfigure}[t]{0.4\textwidth}
    \centering
    \includegraphics[width=\linewidth]{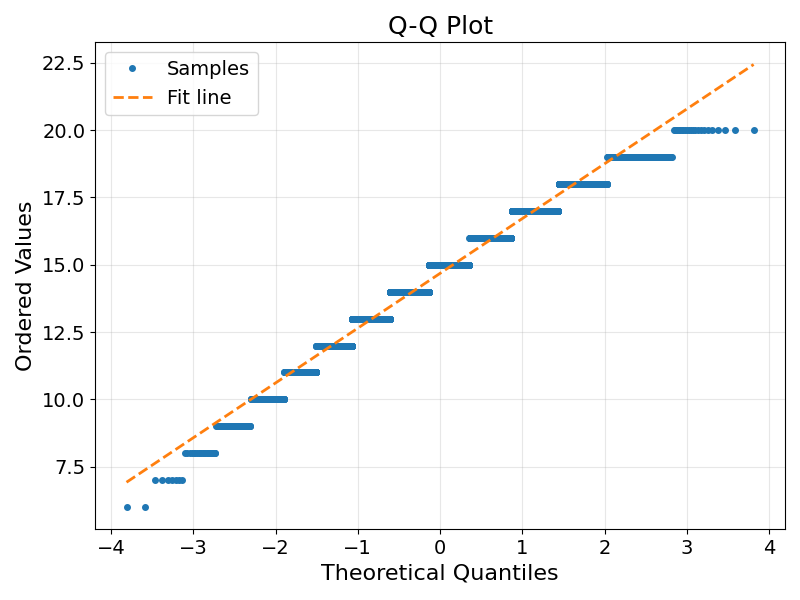}
  \end{subfigure}
  \begin{subfigure}[t]{0.4\textwidth}
    \centering
    \includegraphics[width=\linewidth]{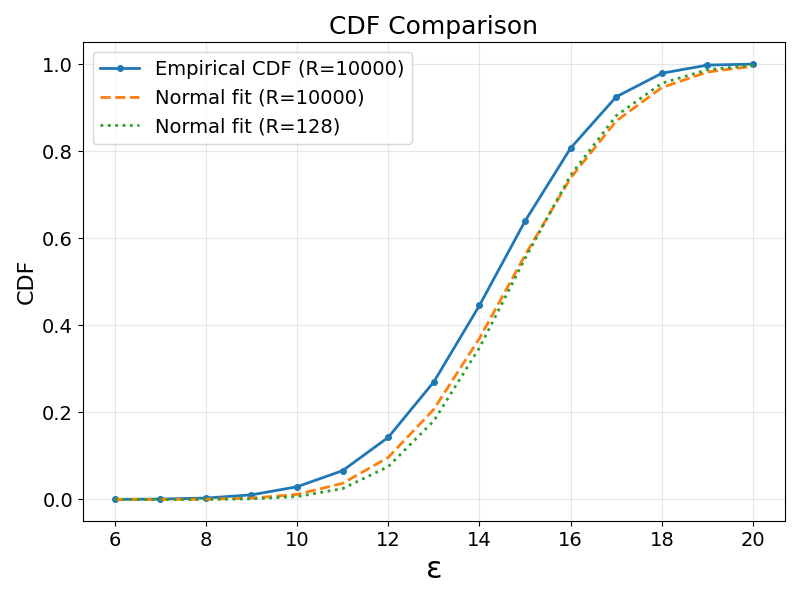}
  \end{subfigure}

  \vspace{0.6em}

  \begin{subfigure}[t]{0.4\textwidth}
    \centering
    \includegraphics[width=\linewidth]{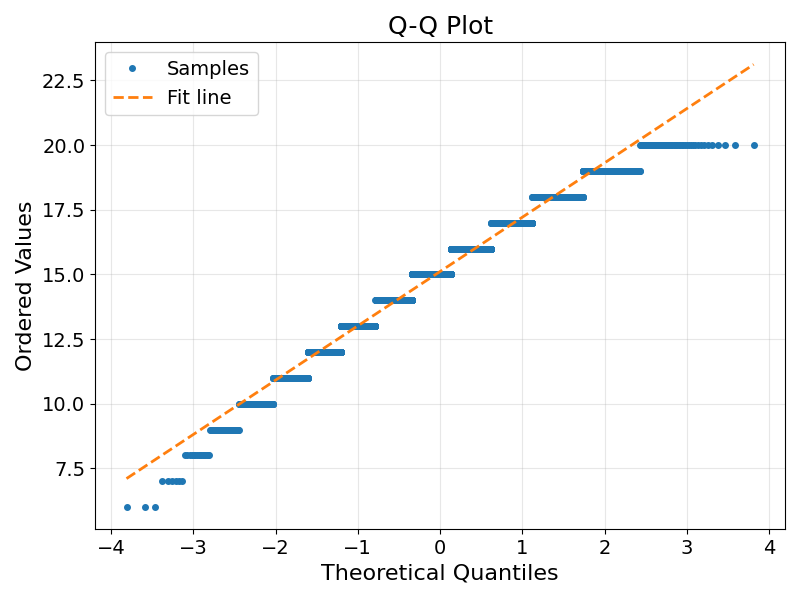}
  \end{subfigure}
  \begin{subfigure}[t]{0.4\textwidth}
    \centering
    \includegraphics[width=\linewidth]{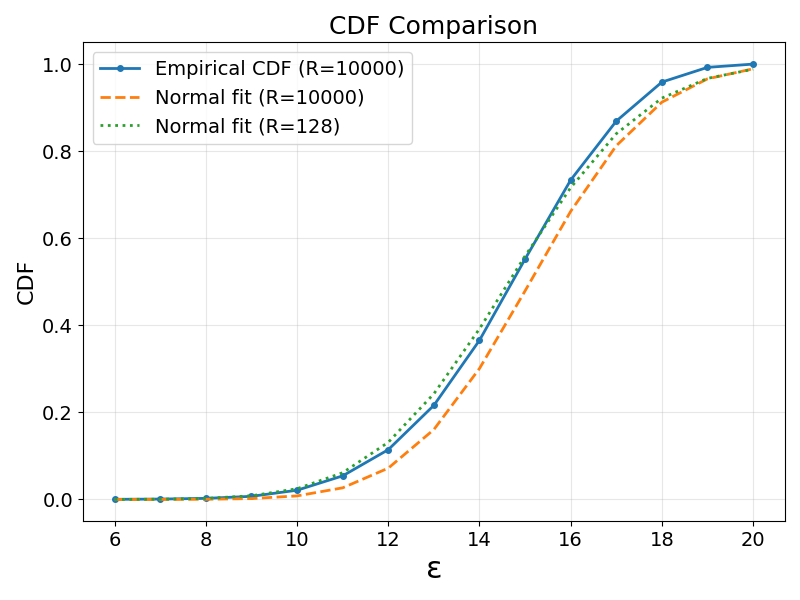}
  \end{subfigure}

  \vspace{0.6em}

  \begin{subfigure}[t]{0.4\textwidth}
    \centering
    \includegraphics[width=\linewidth]{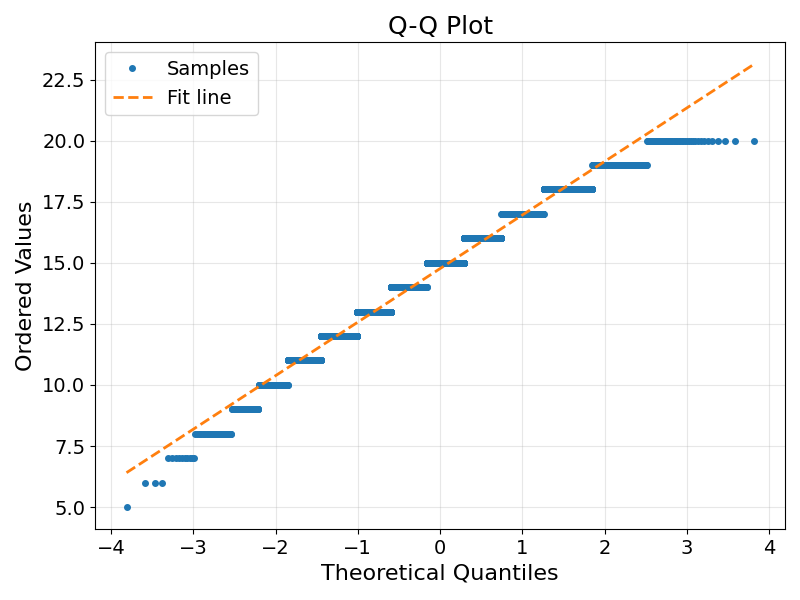}
  \end{subfigure}
  \begin{subfigure}[t]{0.4\textwidth}
    \centering
    \includegraphics[width=\linewidth]{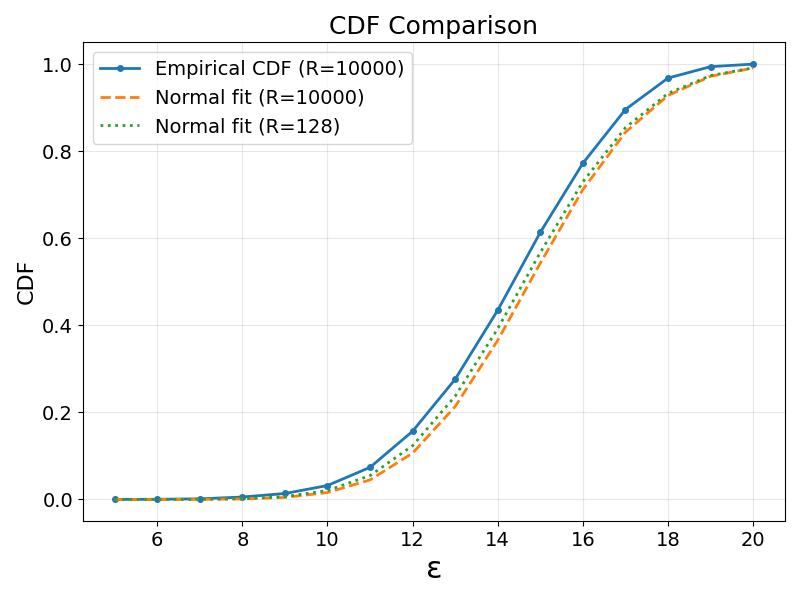}
  \end{subfigure}

  \caption{Paired Q-Q plots (left) and CDF comparisons (right) for three cases for DLM.}
  \label{fig:qq_cdf_pairs}
\end{figure}

\begin{figure}[ht]
  \centering

  \begin{subfigure}[t]{0.4\textwidth}
    \centering
    \includegraphics[width=\linewidth]{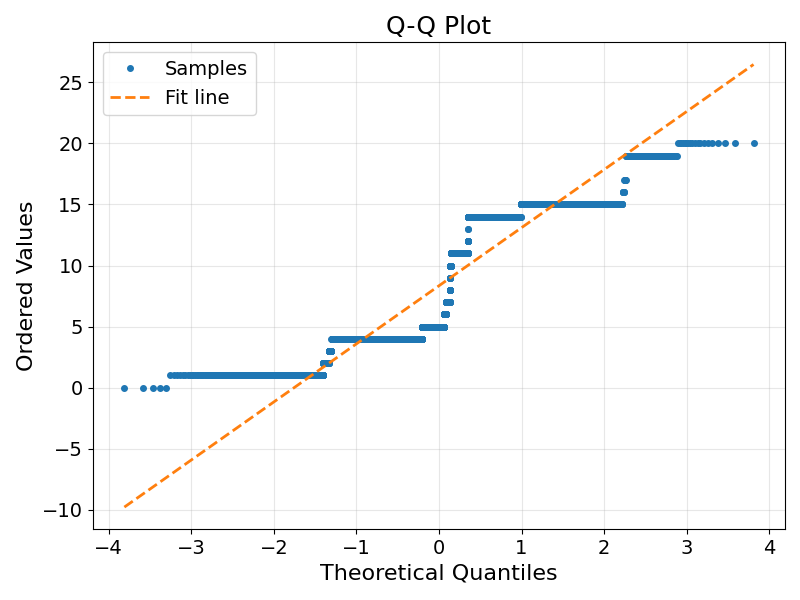}
  \end{subfigure}
  \begin{subfigure}[t]{0.4\textwidth}
    \centering
    \includegraphics[width=\linewidth]{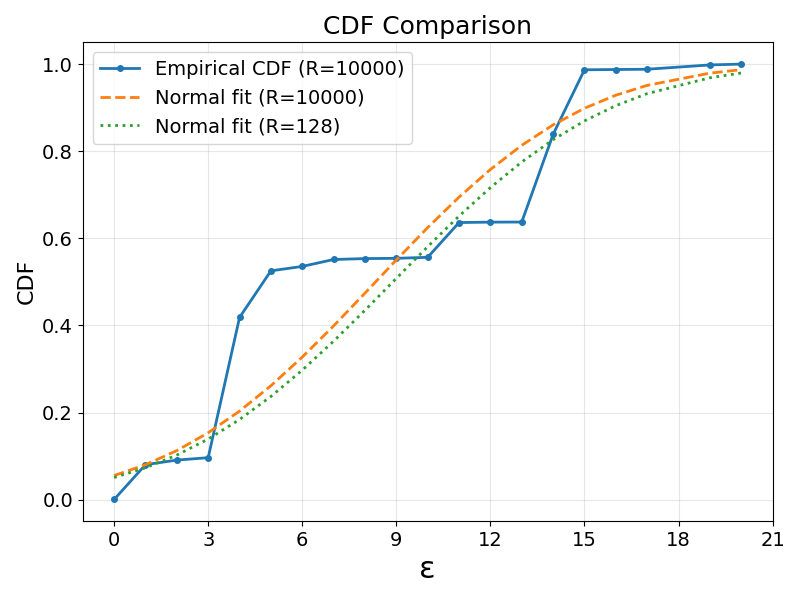}
  \end{subfigure}

  \vspace{0.6em}

  \begin{subfigure}[t]{0.4\textwidth}
    \centering
    \includegraphics[width=\linewidth]{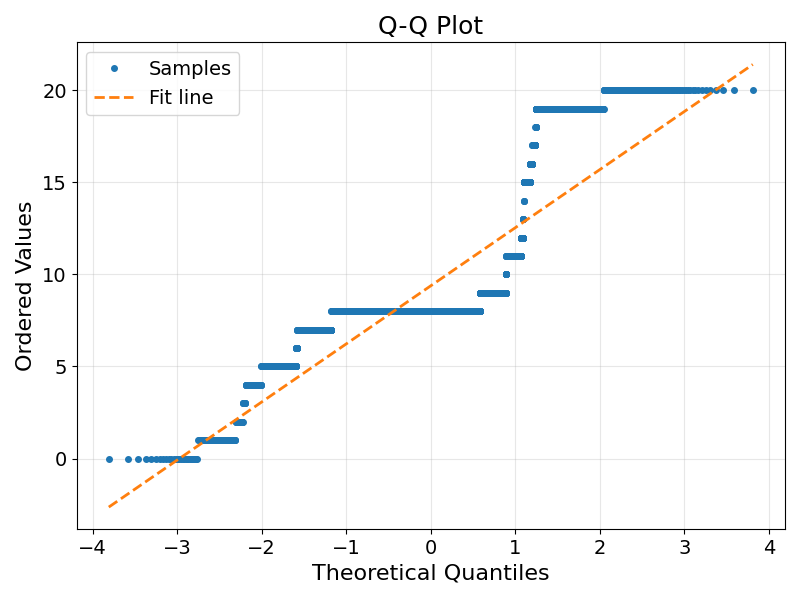}
  \end{subfigure}
  \begin{subfigure}[t]{0.4\textwidth}
    \centering
    \includegraphics[width=\linewidth]{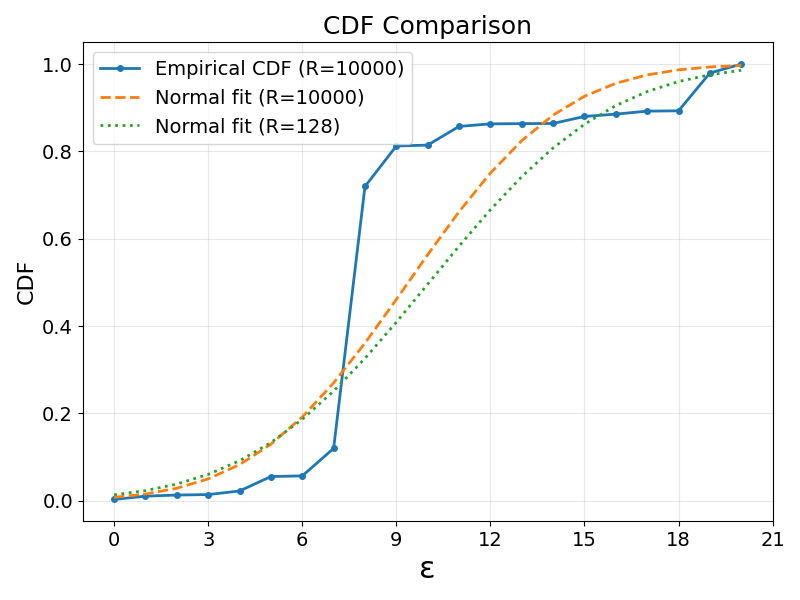}
  \end{subfigure}

  \vspace{0.6em}

  \begin{subfigure}[t]{0.4\textwidth}
    \centering
    \includegraphics[width=\linewidth]{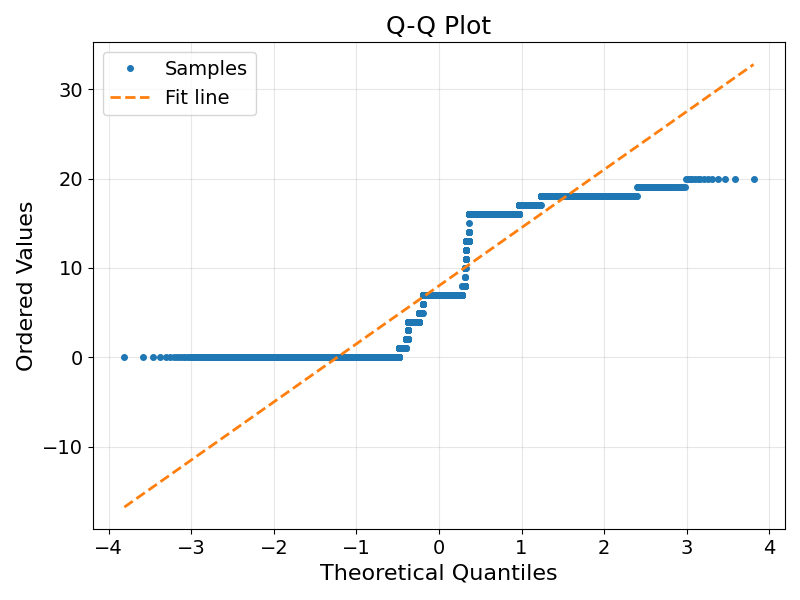}
  \end{subfigure}
  \begin{subfigure}[t]{0.4\textwidth}
    \centering
    \includegraphics[width=\linewidth]{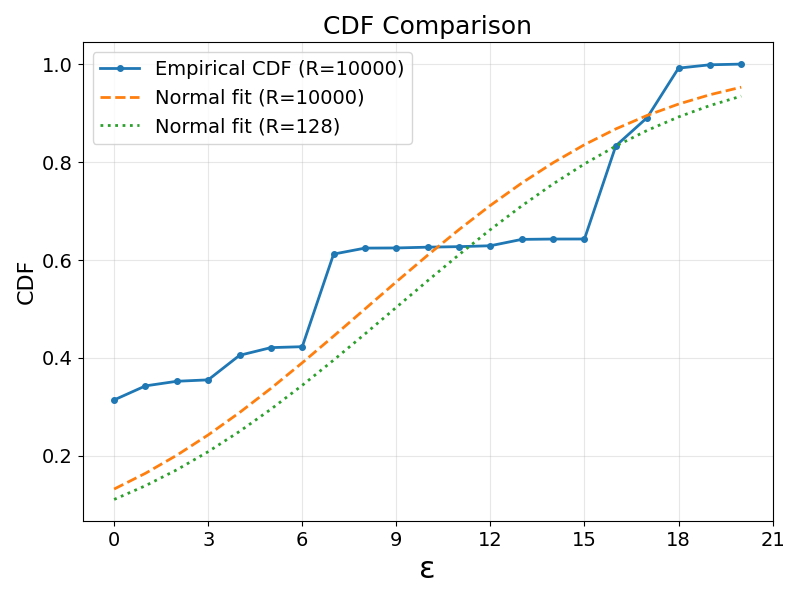}
  \end{subfigure}

  \caption{Paired Q-Q plots (left) and CDF comparisons (right) for three cases for ARM.}
  \label{fig:ar_hit_token_hist_pairs}
\end{figure}


\section{Effect of Masked-patterns on Memorization Estimation}
\label{appendix:sampling-resolution}

\begin{figure}[ht]
    \centering
    \begin{subfigure}{0.6\textwidth}
        \centering
        \includegraphics[width=0.9\linewidth]{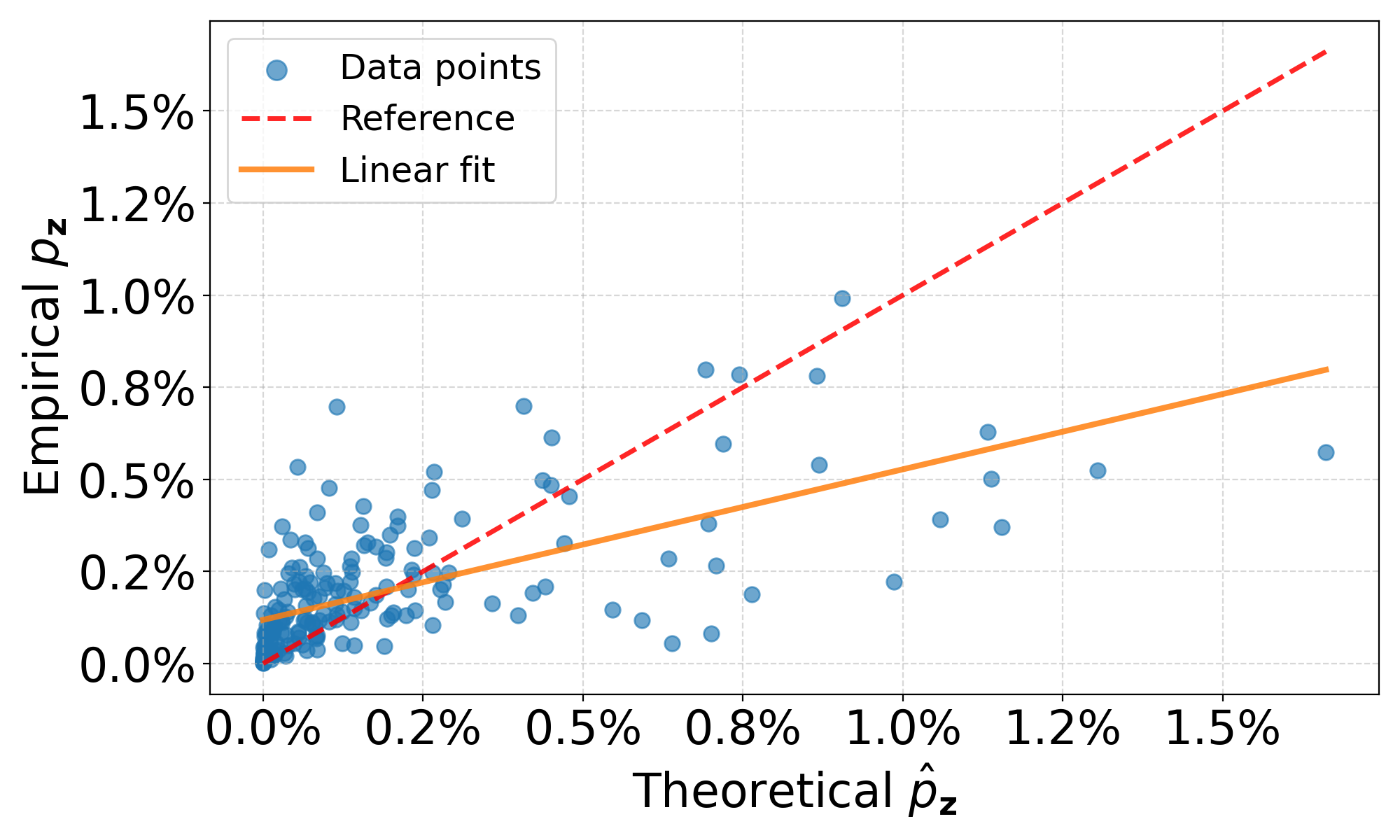}
        \caption{masked-patterns = 128}
        \label{fig:scatter-128}
    \end{subfigure}

    \vspace{0.5em}

    \begin{subfigure}{0.6\textwidth}
        \centering
        \includegraphics[width=0.9\linewidth]{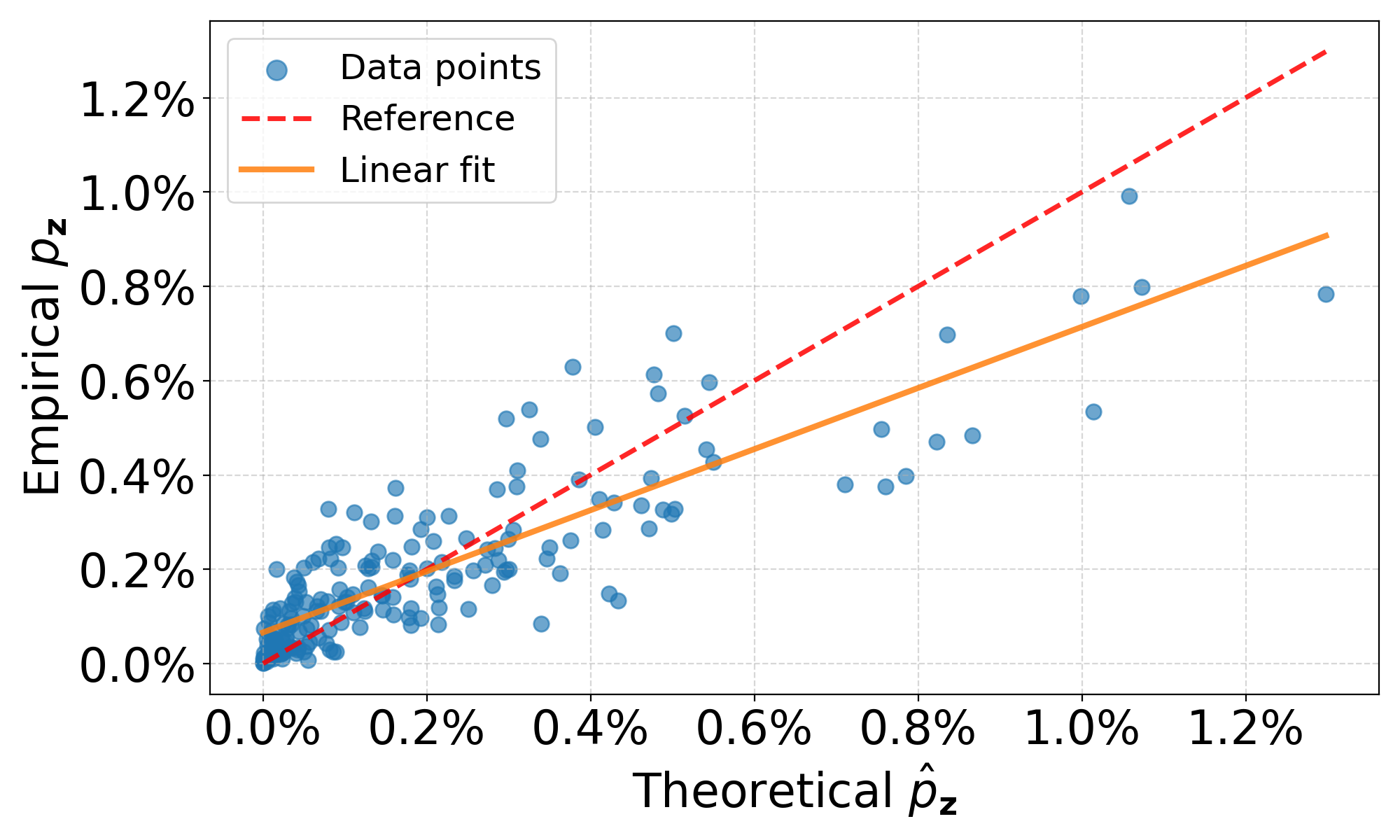}
        \caption{masked-patterns = 256}
        \label{fig:scatter-256}
    \end{subfigure}

    \vspace{0.5em}

    \begin{subfigure}{0.6\textwidth}
        \centering
        \includegraphics[width=0.9\linewidth]{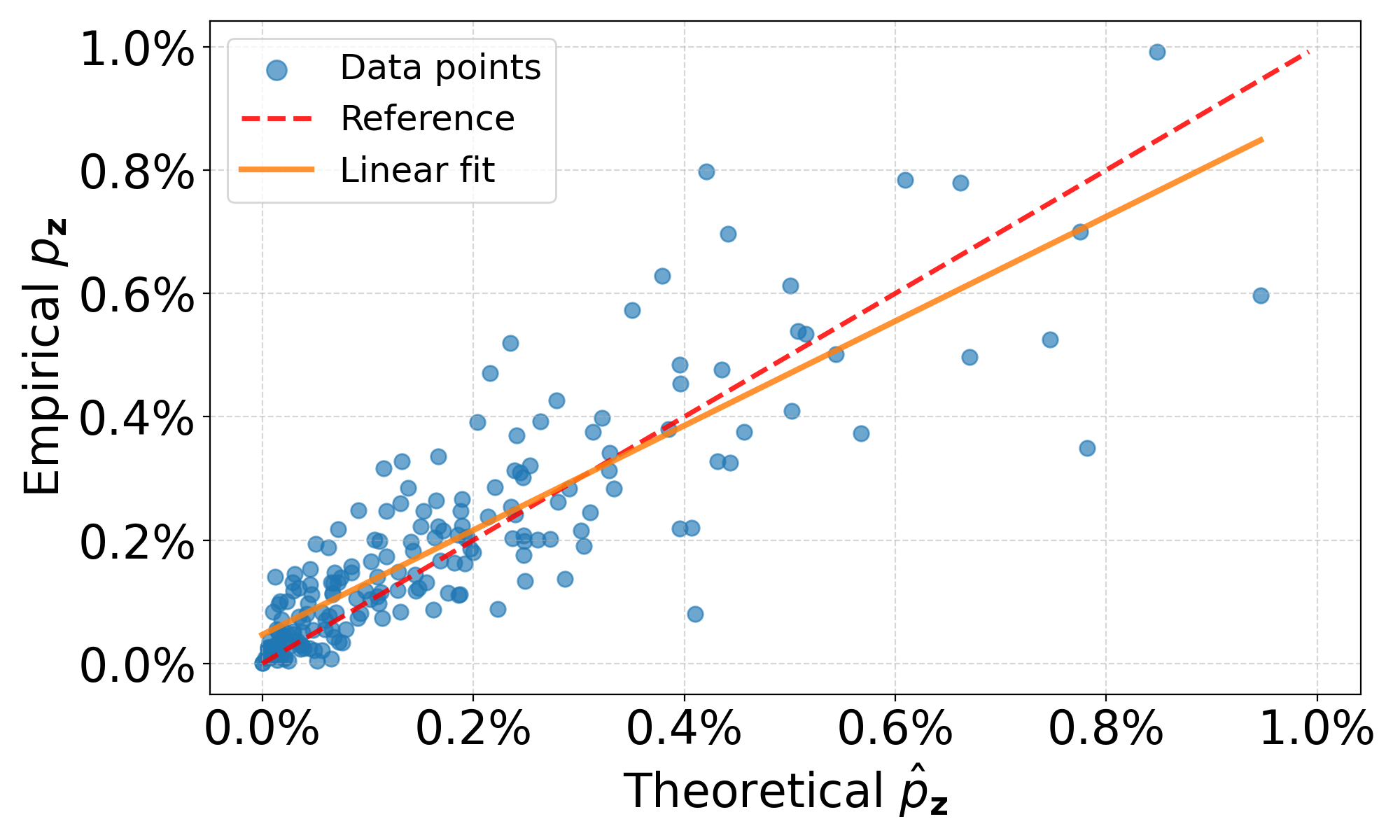}
        \caption{masked-patterns = 512}
        \label{fig:scatter-512}
    \end{subfigure}

    \caption{Comparison of empirical and theoretical memorization probabilities under different sampling masked-patterns quantities. As the number of samples increases, the empirical values converge towards the theoretical identity line.}
    \label{fig:scatter-appendix}
\end{figure}
In this section, we provide additional empirical evidence regarding the impact of the number of sampled masked-patterns $R$ on the estimation of memorization probabilities. Figure~\ref{fig:scatter-appendix} illustrates the correlation between the empirical memorization probability ($p_z$) and the theoretical prediction ($\hat{p}_z$) across different masked-patterns.

\textbf{Convergence and Accuracy.} Our results demonstrate that the estimation accuracy significantly improves as the number of sampled masked-patterns $R$ increases from 128 to 512. We observe that sampling $R=512$ times provides a high-fidelity fit to the theoretical line, suggesting that this amount of masked-patterns is sufficient to capture the underlying memorization behavior of DLMs without prohibitive computational overhead.

\textbf{Analysis of Empirical Variance.} We observe that the empirical points exhibit a degree of dispersion around the identity line across the entire probability spectrum . This variance is a natural consequence of the stochastic nature of the denoising process in DLMs:
\begin{itemize}[leftmargin=*]
    \item \textbf{Stochastic Sampling Fluctuations:} Since each empirical data point is calculated as the mean of $R$ random masked, it is essentially a realization of a binomial-like estimator. Therefore, some level of statistical fluctuation is inevitable. Even for samples with high theoretical memorization probability, the specific set of $R$ trajectories sampled may include more or fewer "successful" extraction paths than the expected average.
    \item \textbf{Natural Statistical Dispersion:} This variance does not represent a flaw in the theoretical derivation, but rather reflects the inherent randomness of the sampling-based estimation. Crucially, as $R$ increases, the standard deviation of these fluctuations decreases, and the empirical values remain centered around the theoretical identity line within a consistent order of magnitude.
\end{itemize}

In summary, the empirical results in Figure~\ref{fig:scatter-appendix} confirm that our theoretical framework provides an accurate "mean" behavior for data extraction. The observed variance is consistent with standard Monte Carlo estimation errors and does not impede the practical utility of our metric in identifying memorized content.

\section{PII Memorization}\label{app:pii}
\begin{table}[t]
\centering
\small
\setlength{\tabcolsep}{4pt}
\begin{tabular}{llcccccccc}
\toprule
\multirow{2}{*}{\textbf{Model}} & \multirow{2}{*}{\textbf{Step}} &
\multicolumn{4}{c}{\textbf{Email}} &
\multicolumn{4}{c}{\textbf{Phone}} \\
\cmidrule(lr){3-6}\cmidrule(lr){7-10}
 
& & $p$ = 10\% & $p$ = 50\% & $p$ = 90\% & $p$ = 99\%
  & $p$ = 10\% & $p$ = 50\% & $p$ = 90\% & $p$ = 99\% \\
\midrule
\multirow{2}{*}{\textsc{DLM-170M}}
  & One  & 0  & 0  & 0 & 0  & 0  & 0  & 0 & 0 \\
  & Max  & 7 & 2 & 0 & 0  & 0  & 0  & 0 & 0 \\
\cmidrule(lr){1-10}
\multirow{2}{*}{\textsc{DLM-629M}}
  & One  & 0  & 0  & 0 & 0  & 0  & 0  & 0 & 0 \\
  & Max  & 17 & 10 & 7 & 4  & 0  & 0  & 0 & 0 \\
\cmidrule(lr){1-10}
\multirow{2}{*}{\textsc{DLM-1.1B}}
  & One  & 0  & 0  & 0 & 0  & 0  & 0  & 0 & 0 \\
  & Max  & 26 & 16 & 10 & 7  & 0  & 0  & 0 & 0 \\
\bottomrule
\end{tabular}
\caption{Generalized $(n,p)$-discoverable Extraction. Number of memorized samples at target probabilities $p$ under a query budget of $n=10{,}000$ on $3{,}000$ completion prompts, for different PII types.}
\label{tab:nptableappendix}
\end{table}

\begin{figure}[t]
    \centering
    \begin{subfigure}{0.9\textwidth}
        \centering
        \includegraphics[width=\linewidth]{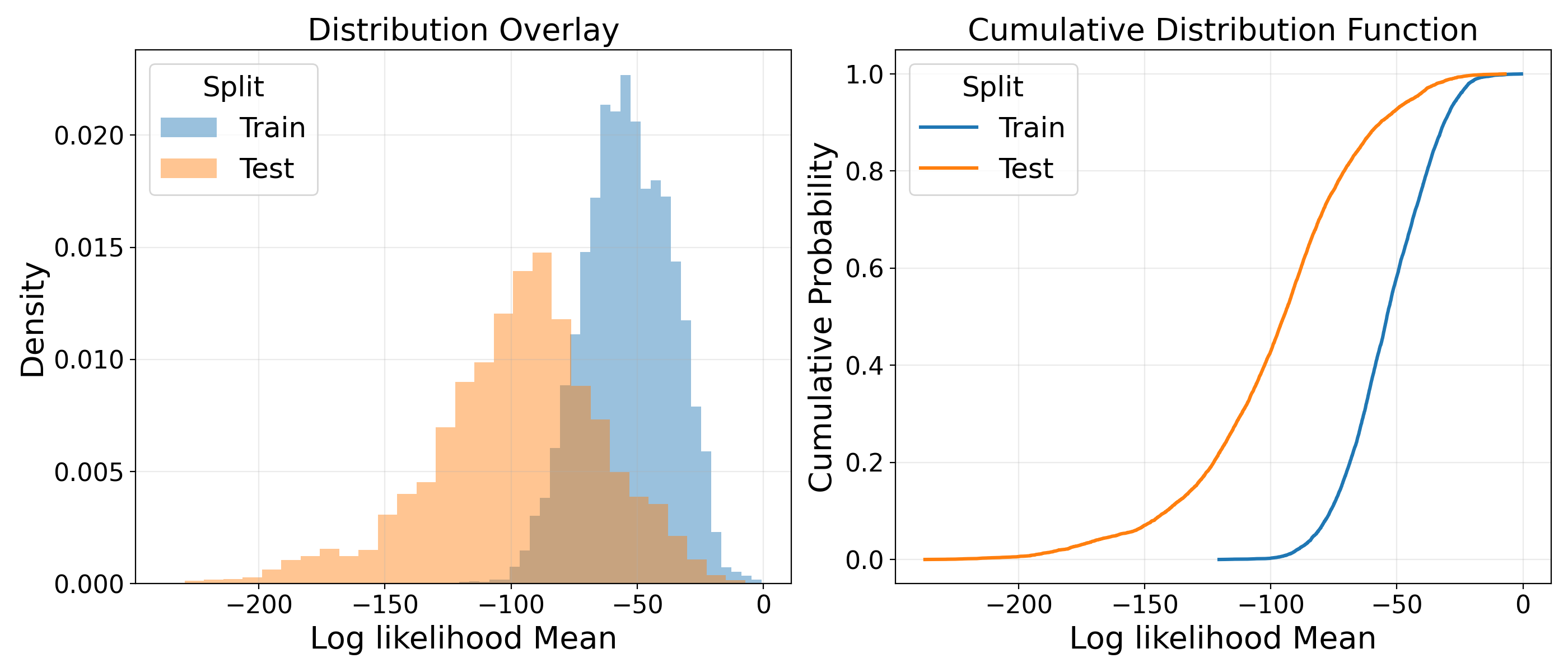}
        \caption{\textsc{DLM-170M}}
        \label{fig:169}
    \end{subfigure}

    \vspace{0.5em}

    \begin{subfigure}{0.9\textwidth}
        \centering
        \includegraphics[width=\linewidth]{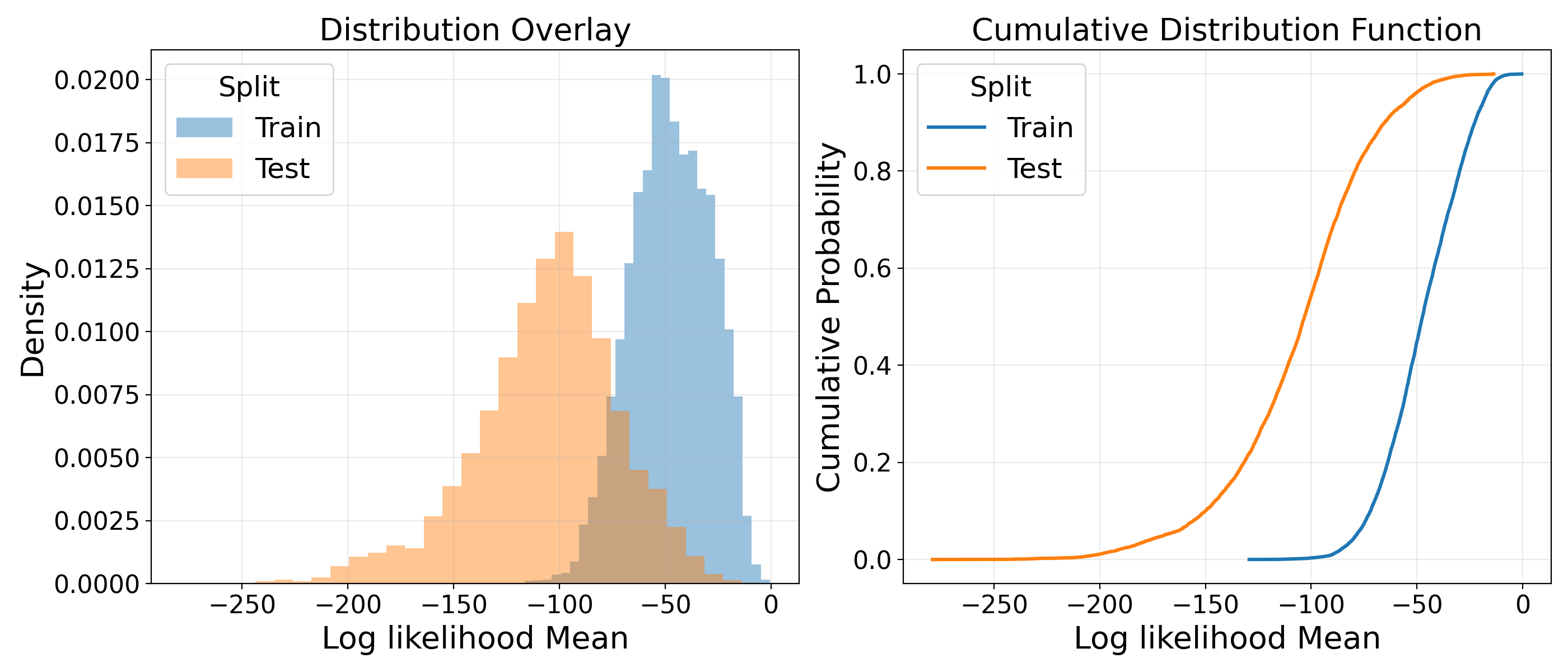}
        \caption{\textsc{DLM-629M}}
        \label{fig:512}
    \end{subfigure}

    \caption{Validating memorization of training data under different DLM scales. We compare the reconstruction likelihood from the Enron (train) against the unseen from the TREC 2007 Spam (test), which is from the same domain.}
    \label{fig:traintest-appendix}
\end{figure}
In this section, we extend our evaluation of $(n, p)$-discoverable memorization to smaller model variants (\textsc{DLM-170M} and \textsc{DLM-629M}) across different types of Personally Identifiable Information (PII). Table~\ref{tab:nptableappendix} summarizes the number of memorized samples for email addresses and phone numbers under varying target probability thresholds $p$.

\textbf{Low Leakage in Small-Scale DLMs.} Our empirical results indicate that small-scale DLMs exhibit extremely low leakage probabilities for PII. This is particularly evident for \textit{phone numbers}, where the number of memorized samples remains zero across all tested models and confidence levels $p \in \{10\%, 50\%, 90\%, 99\%\}$. Even for email addresses, which appear slightly more susceptible to extraction, the risk remains significantly lower compared to larger model scales.

\textbf{Validation of Model Memorization.} To ensure that these low extraction rates are a meaningful reflection of the model's privacy-preserving properties—rather than a byproduct of underfitting or a general failure to memorize the training set—we conducted auxiliary measurements. Specifically, we evaluated the \textsc{DLM-170M} and \textsc{DLM-629M} models using the same training data reconstruction methodology described in Sec~\ref{sec:enrontrec}. Figure \ref{fig:traintest-appendix} confirmed that both models show higher confidence in the Enron training corpus. Therefore, the low extraction rates reported in Table~\ref{tab:nptableappendix} serve as a valuable and reliable reference for PII risks in smaller diffusion language models.

\section{Memorization examples}\label{app:memexp}
We include the following memorized examples from Section~\ref{sec:6.2} in this appendix.

\begin{tcolorbox}[
  colback=gray!5!white,
  colframe=gray!50!black,
  boxrule=0.5pt,
  arc=2pt,
  left=2pt,right=2pt,top=2pt,bottom=2pt
]
\scriptsize
\textbf{Original}

\begin{Verbatim}[
  breaklines=true,
  breakanywhere=true,
  breaksymbolleft=,
  breaksymbolright=
]
1948 Summer Olympics.

References

External links
 

1927 births
2020 deaths
Athletes (track and field) at the 1948 Summer Olympics
Brazilian female sprinters
Olympic athletes of Brazil
Olympic female sprinters</s>  Why is Tommy Hunter making us all nervous?
One of the Phillies' offseason acquisitions
\end{Verbatim}

\textbf{Recoverable Masked-Pattern}

\begin{Verbatim}[
  breaklines=true,
  breakanywhere=true,
  breaksymbolleft=,
  breaksymbolright=
]
<MASK>  <MASK> 948 <MASK>  <MASK> . <MASK> 
<MASK> ferences

External <MASK> 
 

<MASK> 92 <MASK>  <MASK> s <MASK>  <MASK> 020 deaths <MASK> Athlet <MASK>  (track and <MASK> ) at the 19 <MASK>  <MASK>  Summer <MASK>  <MASK> Brazilian female sprinter <MASK> 
Olympic <MASK> es <MASK>  Brazil <MASK> O <MASK> ic <MASK>  spr <MASK> s</s>  Why <MASK>  Tommy Hunter making us all nervous?
One <MASK>  <MASK>  Phillies' offseason <MASK> quisitions
\end{Verbatim}

\end{tcolorbox}

\begin{tcolorbox}[
  colback=gray!5!white,
  colframe=gray!50!black,
  boxrule=0.5pt,
  arc=2pt,
  left=2pt,right=2pt,top=2pt,bottom=2pt
]
\scriptsize
\textbf{Original}

\begin{Verbatim}[
  breaklines=true,
  breakanywhere=true,
  breaksymbolleft=,
  breaksymbolright=
]
te change is a natural phenomenon caused by several factors such as solar output, the planet's orbit, and ocean circulation. However, human activity that releases greenhouse gases into the atmosphere has made the effects of climate change more severe.
According to scientists, even if the world stopped emitting carbon dioxide today, global warming would persist for decades to come because of the amount of carbon dioxide we've already pumped into the atmosphere.
\end{Verbatim}

\textbf{Recoverable Masked-Pattern}

\begin{Verbatim}[
  breaklines=true,
  breakanywhere=true,
  breaksymbolleft=,
  breaksymbolright=
]
te change is a natural <MASK>  <MASK>  caused <MASK>  several factors such <MASK>  solar output <MASK>  <MASK>  planet <MASK> s orbit, <MASK>  ocean circulation. However, human activity that releases green <MASK>  gases <MASK>  the atmosphere <MASK>  made <MASK>  effects of climate change more severe.
According <MASK>  scientists, even <MASK>  the <MASK>  <MASK>  emitting carbon dioxide today, global war <MASK>  would persist for decades to come because of the amount of carbon diox <MASK>  we've already <MASK> ump <MASK>  into the atmosphere.
\end{Verbatim}

\end{tcolorbox}

\begin{tcolorbox}[
  colback=gray!5!white,
  colframe=gray!50!black,
  boxrule=0.5pt,
  arc=2pt,
  left=2pt,right=2pt,top=2pt,bottom=2pt
]
\scriptsize
\textbf{Original}

\begin{Verbatim}[
  breaklines=true,
  breakanywhere=true,
  breaksymbolleft=,
  breaksymbolright=
]
control. Researchers believe that hereditary ataxia may supply clues to the genetic causes of RLS.
Attention-deficit hyperactivity disorder (ADHD).
Psychiatric disorders, such as depression.
Environmental and Dietary Factors
The following environmental and dietary factors can trigger or worsen RLS:
Iron deficiency. People who are deficient in iron are at risk for R
\end{Verbatim}

\textbf{Recoverable Masked-Pattern}

\begin{Verbatim}[
  breaklines=true,
  breakanywhere=true,
  breaksymbolleft=,
  breaksymbolright=
]
control. Researchers believe that her <MASK> ary ataxia may supply clues to the genetic causes of RLS.
Att <MASK>  <MASK>  <MASK> ic <MASK>  hyperactivity disorder <MASK> AD <MASK> ).
<MASK> sychiatric disorders, such as de <MASK>  <MASK> 
<MASK> al and <MASK> ary Factors
The following environmental and <MASK> etary <MASK>  can trigger or <MASK> orsen RLS <MASK> 
I <MASK>  deficiency. People who <MASK>  defic <MASK>  <MASK>  iron are at risk for R
\end{Verbatim}

\end{tcolorbox}

\clearpage
\section{PII extraction}\label{app:regex}

We adopt the regular expressions in \cite{kim2023propile}.\ to search for email addresses and U.S.\ phone numbers; the patterns are shown below.

\begin{lstlisting}[language=Python, basicstyle=\ttfamily\small, frame=single, breaklines=true]
EMAIL_RE = re.compile("^([a-zA-Z0-9_\-\.]+)@([a-zA-Z0-9_\-\.]+)\.([a-zA-Z]{2,5})$")
PHONE_RE = re.compile("[0-9][0-9][0-9][-.()][0-9][0-9][0-9][-.()][0-9][0-9][0-9][0-9]")
\end{lstlisting}

\end{document}